%% file: main.tex
\documentclass{article}

\PassOptionsToPackage{authoryear,round}{natbib}
\usepackage[preprint]{neurips_2026}
\makeatletter
\renewcommand{\@noticestring}{Preprint.}
\makeatother
\usepackage[T1]{fontenc}
\usepackage[utf8]{inputenc}
\usepackage{microtype}
\usepackage{graphicx}
\usepackage{amsmath,amssymb,amsthm,mathtools}
\usepackage{amsfonts}
\usepackage{bm}
\usepackage{booktabs}
\usepackage{nicefrac}
\usepackage{enumitem}
\usepackage{xcolor}
\usepackage{hyperref}
\usepackage{cleveref}
\usepackage{array}
\usepackage{tabularx}
\usepackage{url}

\hypersetup{
  colorlinks=true,
  linkcolor=blue!50!black,
  citecolor=blue!50!black,
  urlcolor=blue!50!black,
  pdftitle={OTSS: Output-Targeted Soft Segmentation for Contextual Decision-Weight Learning},
  pdfauthor={Renjun Hu and Hyun-Soo Ahn}
}

\setlist[itemize]{leftmargin=1.4em, topsep=0.35em, itemsep=0.2em}
\setlist[enumerate]{leftmargin=1.5em, topsep=0.35em, itemsep=0.2em}

\newtheorem{theorem}{Theorem}
\newtheorem{proposition}{Proposition}
\newtheorem{corollary}{Corollary}

\newcommand{\E}{\mathbb{E}}

\newcommand{\Prob}{\mathbb{P}}
\newcommand{\argmax}{\mathop{\mathrm{arg\,max}}}
\newcommand{\argmin}{\mathop{\mathrm{arg\,min}}}
\newcommand{\tr}{\mathrm{tr}}
\newcommand{\Var}{\mathrm{Var}}
\newcommand{\Cov}{\mathrm{Cov}}

\title{OTSS: Output-Targeted Soft Segmentation for Contextual Decision-Weight Learning}
\author{%
  Renjun Hu \quad Hyun-Soo Ahn\\
  University of Michigan, Ross School of Business\\
  \texttt{renjunhu@umich.edu, hsahn@umich.edu}
}
\date{}

\begin{document}

\maketitle
\begin{abstract}
Many machine learning systems make constrained decisions by optimizing factorized objectives, but the context-specific objective is often treated as fixed. We study \emph{contextual decision-weight learning}: from logged decisions and proxy outputs, learn an optimizer-facing weight vector \(w(x)\) over interpretable decision factors \(z(x,d)\), rather than a direct policy or generic predictive score.

We propose \textbf{OTSS}, an \emph{output-targeted} soft-segmentation model that deploys the personalized decision-ready weight vector. At the function-class level, the theory highlights a hard-versus-soft distinction. Hard partitions incur an approximation-estimation tradeoff under overlap, while a realizable fixed-\(K\) soft class removes the hard-partition approximation floor and attains a parametric rate.

We evaluate OTSS in controlled benchmarks with finite evaluation libraries, where the true weight vector and downstream regret can be computed exactly. In the representative overlap setting, OTSS attains the lowest mean regret among the comparators, including EM mixture regression, the strongest soft-mixture baseline in our comparison; it matches EM on coefficient recovery while running about two orders of magnitude faster. In a matched \(K=5\) benchmark, OTSS remains competitive under hard-routed truth and improves as heterogeneity becomes softer and sample size grows. On a fixed Complete Journey retail anchor with real household covariates and action geometry, OTSS again achieves the lowest mean-regret point estimate.
\end{abstract}

\input{sections/01_introduction}

\input{sections/02_problem_setup}
\input{sections/03_otmoe}
\input{sections/04_theory_brief}

\input{sections/05_experiments_brief}

\begin{ack}
The first author thanks Wang-Chi Cheung and Chung-Piaw Teo from the National University of Singapore for early discussions on this topic; Antoine Atallah from Shopify, his former mentor at Hungryroot, for guidance during an internship that helped inspire this work; and Silviu Pitis from University of Michigan CSE for helpful feedback and comments on an earlier revision.
\end{ack}

\appendix
\input{sections/09_appendix_notes}

\bibliographystyle{plainnat}
\bibliography{references}

\end{document}

%% file: sections/01_introduction.tex
\section{Introduction}
\label{sec:intro}

Many machine learning systems make decisions by solving a constrained optimization problem over a candidate set. In such systems, the main bottleneck is often upstream rather than algorithmic: once an objective is specified, modern solvers can search large feasible sets effectively. A more challenging question is \emph{which objective should be optimized in this context?}

We study the problem, which we call \emph{contextual decision-weight learning}. For each context $x$ and candidate decision $d\in\mathcal{D}$, let $z(x,d)\in\mathbb{R}^J$ denote a vector of interpretable decision factors, and let $w^\star(x)\in\mathbb{R}^J$ denote the latent context-specific decision weight. The ideal decision solves
\[
d^\star(x)\in\arg\max_{d\in\mathcal{D}} w^\star(x)^\top z(x,d).
\]

The learning target is therefore a personalized linear objective that can be passed directly to an existing constrained optimizer. We do not observe \(w^\star(x)\) directly, but infer it from proxy outputs on logged decisions. This differs from standard decision-focused learning, inverse optimization, and contextual prediction: the goal is not to learn a policy or a generic predictive score, but to recover a context-dependent coefficient vector over the decision factors
\citep{elmachtoub2022smart, vlastelica2020blackbox, mandi2024decision, sadana2025contextual, sun2023mom, besbes2025contextual, huyuk2022icb}.

Because \(w^\star(x)\) is learned indirectly from logged outputs, fully separate models for every context are statistically infeasible, while a single pooled model assumes away heterogeneity. A natural compromise is to share statistical strength through latent groups or experts. This raises two choices: whether the groups are learned from output-relevant variation or fixed by exogenous context geometry, and whether a context is assigned to one expert or allowed to interpolate across several.

We argue that a segmentation model for this problem should be both \emph{output-targeted} and \emph{decision-ready}: routing and experts should be trained from observed proxy outputs, and the learned personalized weight vector should remain the quantity used by the optimizer. For such models, the key representational choice is whether segmentation should be hard or soft.

This distinction matters when heterogeneity is neither absent nor discrete. If \(w^\star(x)\) is nearly constant, a pooled model may suffice; if latent structure is effectively one-hot, hard routing can be competitive. Many populations lie between these extremes, with optimizer-facing trade-offs changing smoothly across contexts. In that regime, forcing each context into one segment can be too rigid in transition regions. This view is also consistent with work in choice modeling showing that observed preferences can depend on decision context rather than a single context-free utility ranking \citep{seshadri2019context}.

We propose \textbf{OTSS} (\emph{Output-Targeted Soft Segmentation}) for contextual decision-weight learning. Soft segmentation represents contexts by interpolating among expert trade-off vectors rather than committing to one regime. OTSS combines a gate with expert coefficient vectors and returns
\[
\hat w(x)=\sum_{k=1}^K \hat\alpha_k(x)\hat\beta_k.
\]
OTSS is learned from logged tuples \((x_i,d_i,y_i)\): the historical decision \(d_i\) determines the exposed factor vector \(z(x_i,d_i)\), while the observed proxy output \(y_i\) provides the supervision used to fit both the gate and the experts. Thus OTSS is trained from historical decisions paired with proxy outputs, but deployed through the learned weight map \(x\mapsto\hat w(x)\).

Section~\ref{sec:theory} studies this pooled/hard/soft comparison in a stylized family.Pooled oracle error is governed by variation in the latent mixture weight. Fixed hard classes pay a positive approximation floor under sustained overlap, while oracle-aligned hard classes improve as boundary mass vanishes. In the benchmark family studied here, the realizable soft class removes the hard-partition floor and achieves a parametric \(n^{-1}\) rate. By contrast, the balanced hard-partition benchmark remains on the nonparametric \( n^{-2/3}\) bias--variance scale, with a matching projected-overlap approximation floor at the balanced region count. This gives a quantitative prediction for the controlled experiments.
We show that weight error affects downstream decisions via the scores observed by the optimizer; the appendix records the finite-library regret-transfer bound used for exact benchmark evaluation. Together, the theory and experiments connect representational advantages in the learned objective map to measurable gains in benchmark regret.

This places the paper at the intersection of contextual and inverse optimization, inverse behavioral modeling, and latent-heterogeneity methods including mixture and routing models \citep{elmachtoub2022smart, vlastelica2020blackbox, mandi2024decision, sadana2025contextual, sun2023mom, besbes2025contextual, huyuk2022icb, seshadri2019context, mcfadden2000mixed, kamakura1989probabilistic, sen2017latent, jacobs1991adaptive, jordan1994hme, gormley2019moe, nguyen2024softmaxmoe, nguyen2024sigmoid}. Logged-feedback and slate-evaluation literatures are relevant mainly as neighboring benchmark contexts \citep{dudik2011dr, swaminathan2015crm, swaminathan2017slate, saito2020obd, wu2020mind}. Our contribution is to isolate contextual decision-weight learning as a distinct optimizer-facing problem, develop output-targeted soft segmentation as a decision-ready model class, and show through theory and controlled benchmarks why it behaves differently from pooled or hard alternatives. Concretely:
\begin{itemize}
    \item We formalize \emph{contextual decision-weight learning} for solver-based constrained optimization from observed proxy outputs on logged decisions, with pooled, hard, and soft classes as the main structural comparison.
    \item We propose \textbf{OTSS}, an output-targeted soft-segmentation model for learning optimizer-facing, decision-ready weight vectors.
    \item We show theoretically that hard partitions incur an overlap floor and, in the benchmark family studied here, the balanced hard-partition comparison has an \(O(n^{-2/3})\) upper scale with a matching approximation floor at the balanced partition size, while the realizable soft class achieves a parametric \(O(n^{-1})\) rate.
    \item We construct both controlled and retail-like benchmarks to evaluate hard versus soft segmentation under different degrees of overlap, nuisance misalignment, and relative to the ground truth.
\end{itemize}

%% file: sections/02_problem_setup.tex
\section{Problem Setup}
\label{sec:setup}
In many decision systems, the optimization layer already exists: a solver or decision engine chooses a feasible decision, each decision is summarized by interpretable factors, and the operational question is how those factors should be weighted for a given context. We study \emph{contextual decision-weight learning} in this setting. The learner does not output a black-box score or a policy directly, but a personalized coefficient vector used by the decision procedure. Each feasible decision \(d\in\mathcal{D}\) is represented by a factor vector \(z(x,d)\in\mathbb{R}^J\), and the best decision for context \(x\) is determined by an unobserved weight vector \(w^\star(x)\in\mathbb{R}^J\). For readability the main text writes a common feasible set \(\mathcal{D}\); all definitions extend to a context-dependent feasible correspondence \(\mathcal{D}(x)\), since the focus is on learning the coefficients rather than modeling the feasible set.

\subsection{Setup and logged supervision}
\label{subsec:decision_problem}
\label{subsec:logged_data}

Given a weight map \(w^\star\), the latent decision score is
\[
s^\star(x,d)=w^\star(x)^\top z(x,d),
\]
and an optimal decision can be selected as
\[
d^\star(x)\in\arg\max_{d\in\mathcal{D}} w^\star(x)^\top z(x,d).
\]
The learning target is the map \(x\mapsto w^\star(x)\). Once this map is estimated, the downstream solver can use the estimated weights in the same decision rule. Thus, the goal is to estimate optimizer-facing coefficients, rather than to learn a policy directly.

We do not observe \(w^\star(x)\) directly. Instead, we infer it from logged triples \(\mathcal{T}=\{(x_i,d_i,y_i)\}_{i=1}^n\), where \(x_i\) is the context, \(d_i\in\mathcal{D}\) is the logged decision, and \(y_i\) is an observed proxy output such as retention, conversion, or engagement. The identification challenge is that each context reveals an output only for the decision that was actually taken, while the target is a coefficient vector that can be applied to score feasible decisions. In the main text we use the binary logistic instantiation
\[
\Pr(Y=1\mid x,d)
=
\sigma\!\bigl(b^\star(x)+w^\star(x)^\top z(x,d)\bigr),
\]
where \(\sigma(t)=1/(1+e^{-t})\) and \(b^\star(x)\) is a context-only baseline term. The log provides supervision only at the realized decision: for a given context, we observe the output of the logged choice, not the outputs that would have followed from other feasible decisions. Because \(b^\star(x)\) does not vary with \(d\), it drops out of the optimizer's ranking over feasible decisions; the decision-relevant part is \(w^\star(x)^\top z(x,d)\). This logistic form is the paper's canonical working model. It matches retention- and conversion-style proxy outputs, keeps the learned coefficients interpretable, and makes explicit how logged outputs supervise the coefficient vector used for optimization. Appendix~\ref{app:output_scope} discusses extensions beyond this logistic working model.

\paragraph{Working logged-data assumptions.}
We assume a stable logging environment in which contexts are drawn from a fixed population, logged decisions are produced by the existing decision process, and proxy outputs follow the conditional response model above. The log must contain enough variation in the exposed factor vectors \(z(x_i,d_i)\) to distinguish the decision-dependent score \(w^\star(x)^\top z(x,d)\) from the context-only baseline \(b^\star(x)\). Equivalently, the local Fisher information for the action-dependent coefficients must be nonsingular. The controlled benchmarks enforce this through bounded factors and positive logged exposure over the finite evaluation library. In deployment, the log must similarly cover enough of the decisions that the solver may consider.

\subsection{Function classes and target criterion}
\label{subsec:function_classes}
\label{subsec:comparison}
\label{subsec:regret}

We compare three structural classes for the weight map \(x\mapsto w(x)\).

\paragraph{Pooled class.}
\[
\mathcal{W}_{\mathrm{pool}}
=
\left\{
w:\mathcal{X}\to\mathbb{R}^J \;:\; w(x)=\beta
\text{ for some }\beta\in\mathbb{R}^J
\right\}.
\]

\paragraph{Hard-segmented class.}
Given a hard routing map \(h:\mathcal{X}\to[K]\),
\[
\mathcal{W}_{\mathrm{hard}}(h)
=
\left\{
w:\mathcal{X}\to\mathbb{R}^J \;:\; w(x)=\beta_{h(x)}
\text{ for some }\beta_1,\ldots,\beta_K\in\mathbb{R}^J
\right\}.
\]
This class includes cluster-then-fit procedures with exogenous groups, as well as supervised one-hot segmentation when the routing map is learned from observed proxy outputs.

\paragraph{Soft-segmented class.}
\[
\mathcal{W}_{\mathrm{soft}}
=
\left\{
w:\mathcal{X}\to\mathbb{R}^J \;:\;
w(x)=\sum_{k=1}^K \alpha_k(x)\beta_k,\;
\alpha(x)\in\Delta^K,\;
\beta_k\in\mathbb{R}^J
\right\},
\]
where \(\Delta^K=\{\alpha\in\mathbb{R}_+^K:\sum_{k=1}^K\alpha_k=1\}\). This class allows contexts in transition regions to interpolate across experts rather than being forced into a single segment.

These classes isolate the representational question studied in the paper: whether heterogeneity in optimizer-facing weights should be ignored, represented by hard partitions, or represented through soft interpolation. Output-targeted training enters through how the class is fitted from logged proxy outputs, while decision readiness comes from returning the coefficient vector \(w(x)\) itself.

Given an estimate \(\hat w(x)\), the induced decision is
\[
\hat d(x)\in\arg\max_{d\in\mathcal{D}} \hat w(x)^\top z(x,d),
\]
and the downstream decision regret is
\[
R(x)
=
w^\star(x)^\top z(x,d^\star(x))
-
w^\star(x)^\top z(x,\hat d(x)).
\]
The experiments report both weight error and decision regret. The theory below studies the function-class side of this comparison: when hard segmentation is structurally too rigid, and what changes once the learner moves to a realizable soft class.

%% file: sections/03_otmoe.tex
\section{OTSS}
\label{sec:otmoe}

Section~\ref{sec:setup} defined the learning target in this paper: a context-specific coefficient vector \(w^\star(x)\) that scores feasible decisions through \(w^\star(x)^\top z(x,d)\), even though it must be inferred from observed proxy outputs on logged decisions. OTSS instantiates the soft class from Section~\ref{subsec:function_classes} for this coefficient-learning problem. It uses output-targeted training to learn both a gate over contexts and a bank of expert coefficients, then returns the interpolated weight vector \(\hat w(x)\in\mathbb{R}^J\). Thus the proxy-output model is the training interface, while the deployed quantity is the optimizer-facing coefficient vector. In the main text, we use the binary logistic working model from Section~\ref{sec:setup}, with GLM-style expert coefficients over the shared decision factors.

\begin{figure}[t]
\centering
\includegraphics[width=\linewidth]{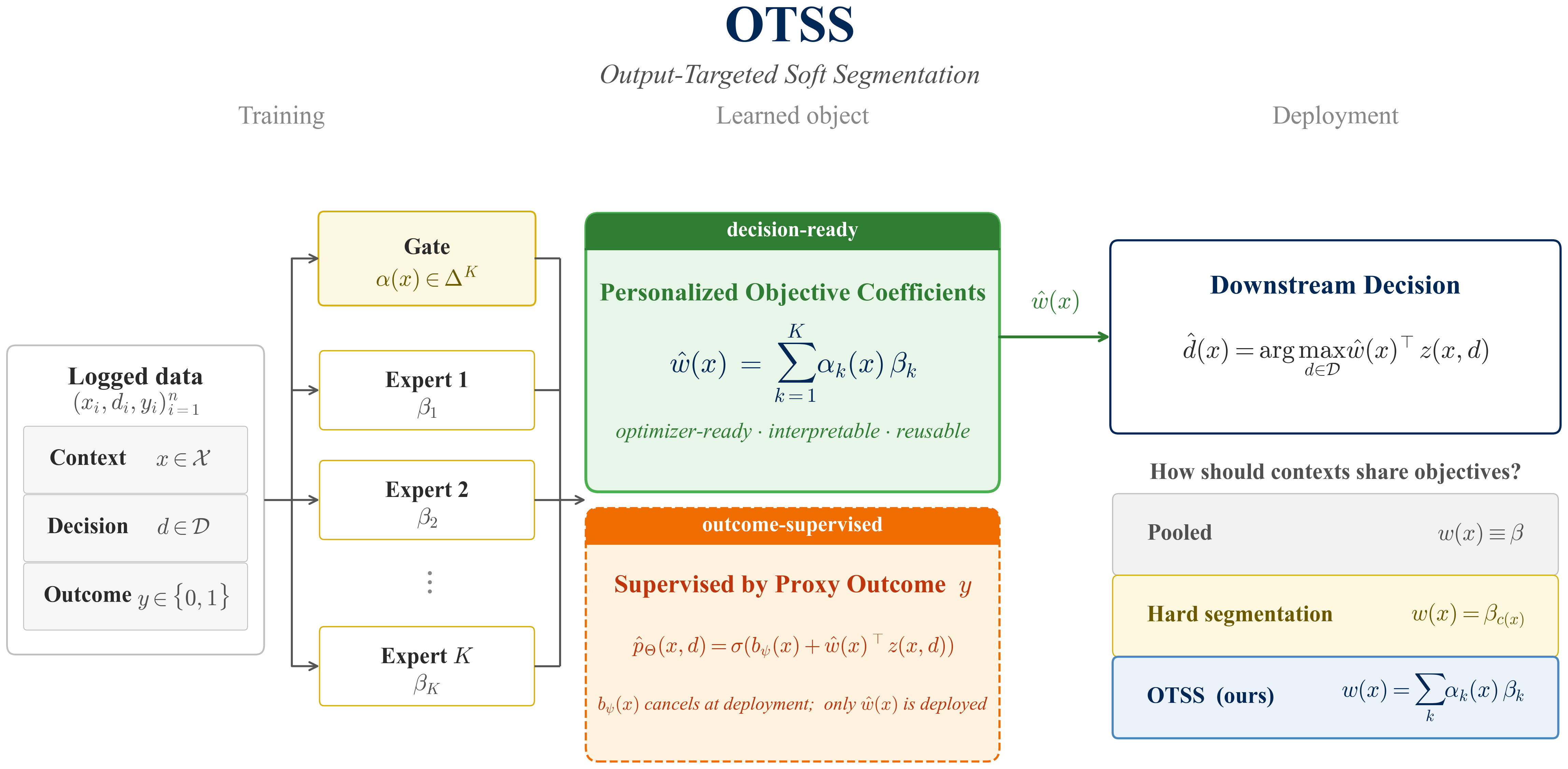}
\caption{OTSS workflow.}
\label{fig:otss_overview}
\end{figure}

\subsection{OTSS as output-targeted, decision-ready soft segmentation}
\label{subsec:otmoe_model}
\label{subsec:otmoe_output_targeted}

OTSS parameterizes the soft-segmented class with a gate over contexts and a bank of expert coefficient vectors. The gate \(G_\phi\) maps context \(x\) to mixture weights \(\alpha(x)=G_\phi(x)\in\Delta^K\), and expert \(k\) contributes \(\beta_k\in\mathbb{R}^J\). The model returns the interpolated decision-weight vector
\[
\hat w_\Theta(x)=\sum_{k=1}^K \alpha_k(x)\beta_k.
\]
A context-only baseline \(b_\psi(x)\) captures output variation common to all feasible decisions for the same context. Under the main logistic instantiation,
\[
\hat p_\Theta(x,d)
=
\sigma\!\left(
b_\psi(x)+\hat w_\Theta(x)^\top z(x,d)
\right),
\]
where \(\Theta=\{\phi,\psi,\beta_1,\ldots,\beta_K\}\). The baseline fits output levels, while the action-dependent term is the part passed to the downstream optimizer.

This is the sense in which the segmentation is output-targeted: the gate and experts are learned end-to-end from observed proxy outputs, not from an unsupervised distance in raw context space. Contexts can therefore receive similar routing weights when they imply similar trade-offs over the decision factors, even if they are not close in raw features.

\subsection{Training and decision-time prediction}
\label{subsec:otmoe_training}
\label{subsec:otmoe_deployment}
Given logged data \(\mathcal{T}=\{(x_i,d_i,y_i)\}_{i=1}^n\), OTSS is fit by minimizing the regularized logistic loss
\[
\mathcal{L}(\Theta)
=
-\frac{1}{n}\sum_{i=1}^n
\left[
y_i\log \hat p_\Theta(x_i,d_i)+(1-y_i)\log(1-\hat p_\Theta(x_i,d_i))
\right]
+
\lambda\,\mathcal{R}(\Theta).
\]
In the main controlled studies, \(G_\phi\) is a low-capacity gate over the raw context features, and the gate and experts are optimized jointly from logged proxy outputs. This keeps the main comparison focused on the pooled, hard, and soft coefficient classes rather than on routing-network capacity. Appendix~\ref{app:secondary_robustness} reports a secondary study that varies the gate family.

At decision time, the context-only baseline \(b_\psi(x)\) drops out because it does not depend on the feasible decision. OTSS therefore uses the learned weight vector to produce
\[
\hat d(x)\in\arg\max_{d\in\mathcal{D}} \hat w_\Theta(x)^\top z(x,d).
\]
The same construction nests the paper's main comparison classes: \(K=1\) gives the pooled model; a one-hot gate gives an output-targeted hard-segmentation model; and exogenously fixed routing gives a cluster-then-fit pipeline over raw context space.

Architecturally, OTSS is a mixture-of-experts model with a softmax gate and GLM experts, following the family introduced by \citet{jacobs1991adaptive} and \citet{jordan1994hme}. The contribution is to use this architecture as an optimizer-facing model of contextual decision weights, and to evaluate the resulting hard-versus-soft distinction in benchmarks that isolate overlap, nuisance misalignment, and matched hard-versus-soft truth.

%% file: sections/04_theory_brief.tex
\section{Theory}
\label{sec:theory}

The theory studies the learned weight map \(w:\mathcal{X}\to\mathbb{R}^J\), with target \(w^\star(x)\) given by the optimizer-facing decision weights from Section~\ref{sec:setup}. For a candidate class \(\mathcal{H}\), we measure its best possible approximation to the target by
\[
\inf_{w\in\mathcal{H}} \E\!\left[\|w(X)-w^\star(X)\|_2^2\right].
\]
This quantity compares which kinds of latent heterogeneity each model class can represent before the learned weights are passed to the solver. Proposition~\ref{thm:hard_partition_decomposition} gives the hard-segmentation decomposition, and Corollary~\ref{thm:quant_overlap_floor} specializes it to the two-expert overlap case. Proposition~\ref{thm:ot_decomposition} gives the soft-segmentation decomposition, and Theorem

\subsection{Hard segmentation can incur a structural approximation floor under overlap}
\label{subsec:hard_floor}

We first fix a hard segmentation and ask what error remains even before estimation. Let \(C:\mathcal{X}\to[K]\) be a measurable partition of the context space, and define
\[
\mathcal{W}_C
=
\left\{
w:\mathcal{X}\to\mathbb{R}^J
\;:\;
w(x)=\beta_{C(x)}
\text{ for some }
\beta_1,\ldots,\beta_K\in\mathbb{R}^J
\right\}.
\]
This covers cluster-then-fit and supervised one-hot routing: in either case, the learned coefficient vector is constant on each hard region. For fixed \(C\), let \(w_C^\dagger(x):=\E[w^\star(X)\mid C(X)=C(x)]\) be the conditional-mean projection of \(w^\star\) onto this class.

\begin{proposition}[Hard-partition decomposition]
\label{thm:hard_partition_decomposition}
For any hard estimator \(\hat w_C\in\mathcal{W}_C\),
\[
\E\!\left[\|w^\star(X)-\hat w_C(X)\|_2^2\right]
=
\underbrace{
\E\!\left[\|w^\star(X)-w_C^\dagger(X)\|_2^2\right]
}_{\mathcal{H}(C)}
+
\underbrace{
\E\!\left[\|w_C^\dagger(X)-\hat w_C(X)\|_2^2\right]
}_{\mathcal{E}_{\mathrm{hard}}(C)}.
\]
Equivalently,
\[
\mathcal{H}(C)
=
\E\!\left[\tr\,\Var\!\left(w^\star(X)\mid C(X)\right)\right].
\]
\end{proposition}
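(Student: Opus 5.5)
The plan is a Pythagorean (orthogonality) argument in \(L^2\). Write \(w^\star(X)-\hat w_C(X) = \bigl(w^\star(X)-w_C^\dagger(X)\bigr) + \bigl(w_C^\dagger(X)-\hat w_C(X)\bigr)\), expand the squared Euclidean norm, and take expectations. This gives the two displayed terms plus the cross term
\[
2\,\E\!\left[\bigl(w^\star(X)-w_C^\dagger(X)\bigr)^\top\bigl(w_C^\dagger(X)-\hat w_C(X)\bigr)\right].
\]
The whole proof reduces to showing this cross term vanishes.

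The key observation is that both \(w_C^\dagger\) and \(\hat w_C\) are constant on each region of the partition. Indeed, \(w_C^\dagger(x)=\E[w^\star(X)\mid C(X)=C(x)]\) by definition, and \(\hat w_C(x)=\beta_{C(x)}\) because \(\hat w_C\in\mathcal{W}_C\). Hence \(V:=w_C^\dagger(X)-\hat w_C(X)\) is \(\sigma(C(X))\)-measurable; explicitly, \(V=\sum_k \mathbf{1}\{C(X)=k\}\bigl(m_k-\beta_k\bigr)\) with \(m_k=\E[w^\star(X)\mid C(X)=k]\).

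To kill the cross term, condition on \(C(X)\) via the tower property: \(\E[U^\top V]=\E\bigl[V^\top\E[U\mid C(X)]\bigr]\), where \(U=w^\star(X)-w_C^\dagger(X)\). Since \(\E[U\mid C(X)]=0\) by definition of \(w_C^\dagger\), the cross term is zero. Because \(C\) takes finitely many values, I would do this concretely as a sum over \(k\) with \(\Pr(C(X)=k)>0\); null regions contribute nothing and can be dropped. This keeps the argument free of measure-theoretic subtleties.

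The integrability assumption \(\E\|w^\star(X)\|_2^2<\infty\) is needed for the expectations to be finite and the expansion to be legitimate. I expect this to be the only real obstacle, and it is mild: it is implicit in the statement, and I would state it explicitly. If \(\E\|w^\star\|^2=\infty\), both sides are infinite, so the identity holds trivially in the extended sense.

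For the equivalent form, condition on \(C(X)\) again:
\[
\mathcal{H}(C)=\E\bigl[\E[\|w^\star(X)-\E[w^\star(X)\mid C(X)]\|_2^2\mid C(X)]\bigr].
\]
The inner conditional expectation is the sum over coordinates \(j\) of the conditional variances of \(w^\star_j(X)\), which is \(\tr\,\Var(w^\star(X)\mid C(X))\). This completes the proposition.
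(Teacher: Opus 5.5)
Your proof is correct and is the standard \(L^2\) orthogonal-projection (Pythagorean) argument: the cross term vanishes because \(w_C^\dagger(X)-\hat w_C(X)\) is \(\sigma(C(X))\)-measurable and \(\E[w^\star(X)-w_C^\dagger(X)\mid C(X)]=0\). The paper gives no separate written proof, but this is the argument it relies on when it calls \(w_C^\dagger\) the conditional-mean projection onto \(\mathcal{W}_C\); if \(\hat w_C\) is fitted from data, read the identity conditionally on the training sample, with the test draw \(X\) independent of it, so that the \(\beta_k\) are fixed and your measurability step applies unchanged.
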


The term \(\mathcal{H}(C)\) is the fixed-class approximation floor: it depends on the segmentation and latent weight map, not on sample size or optimization quality. Since
\[
\inf_{w\in\mathcal{W}_C}
\E\!\left[\|w^\star(X)-w(X)\|_2^2\right]
=
\mathcal{H}(C),
\]
this floor vanishes only when \(w^\star(X)\) is constant within each hard region. If overlap creates genuine within-cell variation, additional data cannot remove it.

The decomposition is general. To make the floor explicit, specialize to the two-expert overlap family used to motivate the main benchmark. Suppose \(X\sim\mathrm{Unif}[0,1]\) and
\[
w^\star(x)
=
\alpha^\star(x)\beta_1^\star
+
\bigl(1-\alpha^\star(x)\bigr)\beta_2^\star,
\]
where \(\beta_1^\star\neq \beta_2^\star\) and \(\alpha^\star:[0,1]\to[0,1]\) is nondecreasing. Let \(\mathcal{W}^{\mathrm{hard}}_M\) denote functions \(w:[0,1]\to\mathbb{R}^J\) that are constant on each cell of a partition of \([0,1]\) into at most \(M\) intervals.Assume there exist \(I=[a,b]\subset[0,1]\) and \(\kappa>0\) such that
\[
\alpha^\star(t)-\alpha^\star(s)\ge \kappa (t-s)
\qquad\text{for all } a\le s<t\le b.
\]

\begin{corollary}[Quantitative overlap floor]
\label{thm:quant_overlap_floor}
Under the conditions above,
\[
\inf_{w\in\mathcal{W}^{\mathrm{hard}}_M}
\E\!\left[\|w^\star(X)-w(X)\|_2^2\right]
\ge
\frac{\|\beta_1^\star-\beta_2^\star\|_2^2\,\kappa^2\,(b-a)^3}{12M^2}.
\]
\end{corollary}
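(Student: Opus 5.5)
The plan is to reduce the vector problem to a scalar one and then lower-bound interval-wise variances of a function with slope at least \(\kappa\) on \(I\). Write \(\Delta:=\beta_1^\star-\beta_2^\star\), so that \(w^\star(x)=\beta_2^\star+\alpha^\star(x)\Delta\).

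\textbf{Step 1: reduce to a scalar variance.} For a fixed partition \(C\) of \([0,1]\) into at most \(M\) intervals, Proposition~\ref{thm:hard_partition_decomposition} gives the infimum over functions constant on the cells of \(C\) as
\[
\mathcal{H}(C)=\E\!\left[\tr\,\Var\!\left(w^\star(X)\mid C(X)\right)\right]=\|\Delta\|_2^2\,\E\!\left[\Var\!\left(\alpha^\star(X)\mid C(X)\right)\right].
\]
Taking the infimum over partitions, it suffices to show that \(\E[\Var(\alpha^\star(X)\mid C(X))]\ge \kappa^2(b-a)^3/(12M^2)\) for every such \(C\).

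\textbf{Step 2: restrict attention to \(I\).} For each cell \(A\) with \(|A|>0\), the unnormalized within-cell variance is
\[
\int_A(\alpha^\star-m_A)^2\,dx=\min_c\int_A(\alpha^\star-c)^2\,dx,
\]
where \(m_A\) is the cell mean. This quantity is at least \(\min_c\int_{A\cap I}(\alpha^\star-c)^2\,dx\). The sets \(A\cap I\) are intervals \(A'_1,\dots,A'_m\) with \(m\le M\), and their lengths \(\ell_j\) sum to \(L:=b-a\).

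\textbf{Step 3: bound each piece.} On an interval \(A'=[s,s+\ell]\subset I\), I claim
\[
\min_c\int_{A'}(\alpha^\star-c)^2\ge \kappa^2\ell^3/12.
\]
This is the one genuinely non-routine step, and I expect it to be the main obstacle. The approach is to take \(c\) optimal and let \(t_0\) be a crossing point, meaning \(\alpha^\star\le c\) to the left of \(t_0\) and \(\alpha^\star\ge c\) to the right; such a point exists because \(\alpha^\star\) is monotone. The slope condition then gives \(|\alpha^\star(t)-c|\ge\kappa|t-t_0|\) for all \(t\in A'\). For \(t>t_0\) and any \(u\in(t_0,t)\) this follows from \(\alpha^\star(t)-c\ge\alpha^\star(t)-\alpha^\star(u)\ge\kappa(t-u)\), letting \(u\downarrow t_0\); jumps only help, and the case \(t<t_0\) is symmetric. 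Hence
\[
\int_{A'}(\alpha^\star-c)^2\ge\kappa^2\int_{A'}(t-t_0)^2\,dt\ge\kappa^2\ell^3/12,
\]
since the second integral is minimized at the midpoint of \(A'\). An alternative route to the same bound is the pairwise identity \(\Var(Z)=\tfrac12\E[(Z-Z')^2]\) for i.i.d.\ \(Z,Z'\), which gives
\[
\Var\ge\frac{1}{2\ell^2}\iint\kappa^2(t-s)^2\,ds\,dt=\frac{\kappa^2\ell^2}{12},
\]
cleanly, with no crossing-point argument needed.

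\textbf{Step 4: sum and apply convexity.} Summing over pieces, the total is at least \(\frac{\kappa^2}{12}\sum_j\ell_j^3\). With \(m\le M\) pieces whose lengths sum to \(L\), convexity of \(\ell\mapsto\ell^3\) (power mean or Hölder) gives \(\sum_j\ell_j^3\ge L^3/m^2\ge L^3/M^2\). Multiplying by \(\|\Delta\|_2^2\) from Step 1 yields the stated bound, uniformly over partitions and therefore for the infimum.
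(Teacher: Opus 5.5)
Your proof is correct and follows the same route as the paper: reduce to scalar step-function approximation of $\alpha^\star$ scaled by $\|\beta_1^\star-\beta_2^\star\|_2^2$, restrict each cell to $I$, show that a subinterval of length $h$ contributes at least $\kappa^2h^3/12$, and finish with convexity of $h\mapsto h^3$. The differences are only in how two sub-steps are proved. For the scalar reduction, the paper uses the orthogonal split $c_j=\beta_2^\star+t_j\Delta+u_j$ with $u_j\perp\Delta$, where you use $\tr\Var(w^\star(X)\mid C(X))=\|\Delta\|_2^2\Var(\alpha^\star(X)\mid C(X))$. For the per-cell bound, the paper notes that $\phi(x)=\alpha^\star(x)-\kappa x$ is nondecreasing on $I$, so $\Cov(U,\phi(U))\ge 0$ and $\Var(\alpha^\star(U))\ge\kappa^2\Var(U)=\kappa^2h^2/12$; your crossing-point and pairwise-difference arguments reach the same bound and are both valid.
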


Thus, under sustained overlap, the best \(M\)-region hard class is limited by piecewise-constant approximation error. Increasing \(M\) reduces this hard approximation term only at the \(M^{-2}\) scale. This identifies the overlap case where hard segmentation is structurally too rigid. The two edge cases are different. Pooled fitting is accurate when the same weight vector is nearly appropriate across contexts. An oracle-aligned hard split is accurate in a different case: contexts may still require different expert weights, but most contexts are cleanly assigned to one expert, with ambiguity only near a small boundary set. This is not the same as pooling, which uses one coefficient vector for everyone. Formal pooled and aligned-hard bounds are in Appendix~\ref{app:regime_map_proofs}.

\subsection{Under realizability, OTSS has no structural approximation floor}
\label{subsec:ot_decomposition}

The soft class removes the hard-partition floor when the latent weight map is realizable as an interpolation of expert coefficients. In the benchmark family, this means
\[
w^\star(x)=\sum_{k=1}^K \alpha_k^\star(x)\beta_k^\star,
\qquad
\alpha^\star(x)\in\Delta^K,
\]
while OTSS estimates
\[
\hat w_{\mathrm{OTSS}}(x)=\sum_{k=1}^K \alpha_k(x)\beta_k.
\]

\begin{proposition}[OTSS expert/gate decomposition]
\label{thm:ot_decomposition}
For every context \(x\),
\[
\hat w_{\mathrm{OTSS}}(x)-w^\star(x)
=
\sum_{k=1}^K \alpha_k(x)\bigl(\beta_k-\beta_k^\star\bigr)
+
\sum_{k=1}^K \bigl(\alpha_k(x)-\alpha_k^\star(x)\bigr)\beta_k^\star.
\]
Moreover, if \(B_\beta:=\max_{1\le k\le K}\|\beta_k^\star\|_2\), then
\[
\E\!\left[\|\hat w_{\mathrm{OTSS}}(X)-w^\star(X)\|_2^2\right]
\le
2\underbrace{
\sum_{k=1}^K \E[\alpha_k(X)]\,\|\beta_k-\beta_k^\star\|_2^2
}_{\mathcal{E}_{\mathrm{expert}}}
+
2\underbrace{
B_\beta^2\,\E\!\left[\|\alpha(X)-\alpha^\star(X)\|_1^2\right]
}_{\mathcal{E}_{\mathrm{gate}}}.
\]
\end{proposition}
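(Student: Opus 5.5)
The plan is to handle the pointwise identity by add-and-subtract, and then get the mean-square bound by splitting the error into the two terms and applying convexity separately to the expert part and a triangle-type bound to the gate part.

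\textbf{Pointwise identity.} Write
\[
\hat w_{\mathrm{OTSS}}(x)-w^\star(x)=\sum_k \alpha_k(x)\beta_k-\sum_k\alpha_k^\star(x)\beta_k^\star,
\]
and insert \(\pm\sum_k \alpha_k(x)\beta_k^\star\). Grouping the first two sums gives \(\sum_k\alpha_k(x)(\beta_k-\beta_k^\star)\), and the remaining two give \(\sum_k(\alpha_k(x)-\alpha_k^\star(x))\beta_k^\star\). This holds for every \(x\) and needs no assumptions beyond linearity.

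\textbf{Mean-square bound.} Call the two terms \(A(x)\) and \(G(x)\). By \(\|A+G\|_2^2\le 2\|A\|_2^2+2\|G\|_2^2\), it suffices to bound \(\E\|A(X)\|_2^2\) and \(\E\|G(X)\|_2^2\) separately.

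For \(A\), use that \(\alpha(x)\in\Delta^K\). Then \(A(x)\) is a convex combination of the vectors \(\beta_k-\beta_k^\star\), so Jensen's inequality for the convex map \(\|\cdot\|_2^2\) gives
\[
\|A(x)\|_2^2\le\sum_k\alpha_k(x)\|\beta_k-\beta_k^\star\|_2^2.
\]
Taking expectations, and using that the \(\beta_k\) are fixed (non-random given the fitted estimator), yields \(\mathcal{E}_{\mathrm{expert}}\).

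For \(G\), apply the triangle inequality:
\[
\|G(x)\|_2\le\sum_k|\alpha_k(x)-\alpha_k^\star(x)|\,\|\beta_k^\star\|_2\le B_\beta\|\alpha(x)-\alpha^\star(x)\|_1.
\]
Squaring and taking expectations gives \(\mathcal{E}_{\mathrm{gate}}\).

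\textbf{Main obstacle.} The only delicate step is the expert term, because getting \(\E[\alpha_k(X)]\) rather than a cruder factor requires using the simplex constraint through Jensen, not Cauchy--Schwarz. One should also state that the expectation is over \(X\) with the estimated parameters held fixed, which is how the statement is meant. Everything else is routine.
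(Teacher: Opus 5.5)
Your proof is correct and takes the same approach as the paper. The paper gives no standalone proof of this proposition, but its proof sketch for Corollary~\ref{cor:local_joint_weight_rate} uses the same steps: adding and subtracting \(\sum_k\alpha_k(x)\beta_k^\star\), controlling the expert term through the simplex constraint, bounding the gate term by \(B_\beta\|\alpha(x)-\alpha^\star(x)\|_1\) via the triangle inequality, and combining with \(\|a+b\|_2^2\le 2\|a\|_2^2+2\|b\|_2^2\). Your note that the expectation is over \(X\) with the fitted \(\beta_k\) held fixed is what lets \(\|\beta_k-\beta_k^\star\|_2^2\) sit outside the expectation.
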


Under realizability, the proposition says something simple: if the true weights are formed by blending a fixed set of expert coefficients, then a soft model can represent that map without carving the context space into hard regions. Overlap between experts is therefore not a permanent approximation error. The remaining error comes from two estimation tasks: learning the expert coefficients and learning how much each context should use each expert. The first term in the bound tracks coefficient error; the second tracks gate error. Thus soft segmentation turns overlap from a hard-partition representation problem into an expert-and-gate estimation problem. Hard-routed truth is included as the one-hot special case \(\alpha_k^\star(x)\in\{0,1\}\), so the same class covers both one-hot structure and genuine overlap.

\subsection{Finite-sample rate separation}
\label{subsec:rate_separation}

Sections~\ref{subsec:hard_floor} and~\ref{subsec:ot_decomposition} show the mechanism: hard classes incur structural approximation cost under overlap, while realizable soft classes incur only estimation error. The theorem below compares the two rates in a fixed-\(K\) oracle-gate setting. We use \(K\) for latent soft experts and \(M\) for hard regions. The soft benchmark keeps \(K\) fixed, while \(M\) is tuned for the hard benchmark, giving the hard side its best bias--variance tradeoff.

\begin{theorem}[Hard-versus-soft rate separation]
\label{thm:rate_separation}
Let \(X\sim\mathrm{Unif}[0,1]\) and suppose
\[
w^\star(x)=\sum_{k=1}^K \alpha_k^\star(x)\beta_k^\star,
\qquad
\alpha^\star(x)\in\Delta^K,
\]
with fixed \(K\). Under the standard smoothness, overlap, and finite-dimensional MLE regularity conditions stated in Appendix~\ref{app:rate_separation_proof}, the hard and soft benchmarks have the following rates.

\paragraph{A. Hard class.}
For an \(M\)-region hard partition estimator,
\[
\E\!\left[\|\hat w_M(X)-w^\star(X)\|_2^2\right]
\le
O(M^{-2})+O(MJ/n).
\]
The two terms reflect hard-region approximation and per-region estimation. Optimizing over \(M\) gives the balanced hard rate
\[
O\!\left((J/n)^{2/3}\right).
\]
Under projected overlap, the approximation floor keeps the hard benchmark on this \(n^{-2/3}\) scale.

\paragraph{B. Soft class with oracle gate.}
In the realizable soft class with known gate \(\alpha^\star\), the oracle-gate logistic MLE satisfies
\[
\E\!\left[\|\hat w_{\mathrm{soft}}(X)-w^\star(X)\|_2^2\right]
=
O\!\left(\frac{KJ+p}{n}\right),
\]
where \(p\) counts fixed baseline or other non-expert parameters. For fixed \(K\), \(p\), and \(J\), this is the parametric \(n^{-1}\) rate.
\end{theorem}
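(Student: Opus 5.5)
Part A follows the decomposition of Proposition~\ref{thm:hard_partition_decomposition}. Partition \([0,1]\) into \(M\) equal intervals \(I_1,\dots,I_M\) of length \(1/M\), and let \(\hat w_M\) be the region-wise logistic MLE. This is a \(J\)-dimensional expert coefficient per cell, with the baseline handled as a nuisance parameter in each cell. The error then splits into an approximation term \(\mathcal{H}(C)\) and an estimation term \(\mathcal{E}_{\mathrm{hard}}(C)\).

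For the approximation term, I would assume the smoothness condition that each \(\alpha_k^\star\) is Lipschitz with constant \(L\). Then \(w^\star\) is Lipschitz with constant \(L\sqrt{K}B_\beta\), or a comparable constant. Within each cell, \(\tr\Var(w^\star(X)\mid X\in I_m)\le \mathrm{Lip}(w^\star)^2/(12M^2)\), so \(\mathcal{H}(C)=O(M^{-2})\).

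For the estimation term, each cell contains about \(n/M\) samples. Under the overlap and Fisher-nonsingularity conditions, which require the cell-level information matrix to have eigenvalues bounded below uniformly in \(m\) and \(M\), a standard fixed-dimensional logistic MLE risk bound gives \(\E\|\hat\beta_m-\beta_m^\dagger\|_2^2=O(J/(n/M))\). Here \(\beta_m^\dagger\) is the cell's population logistic target. Averaging over cells with weight \(1/M\) gives \(O(MJ/n)\).

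One subtlety needs handling. The logistic pseudo-true parameter \(\beta_m^\dagger\) is not exactly the conditional mean \(w_C^\dagger\), so I would bound \(\|\beta_m^\dagger-\E[w^\star\mid I_m]\|\) by \(O(\mathrm{diam}\text{-variation})=O(1/M)\). I would do this with a first-order expansion of the score equation around the conditional mean, using the same Lipschitz constant. Its square is then absorbed into \(O(M^{-2})\).

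Balancing \(M^{-2}\asymp MJ/n\) gives \(M\asymp(n/J)^{1/3}\) and the rate \((J/n)^{2/3}\). For the matching lower scale under projected overlap, I would apply Corollary~\ref{thm:quant_overlap_floor} to the projection of \(w^\star\) onto a direction \(u\) in which \(u^\top w^\star\) has slope at least \(\kappa\) on an interval. This gives an approximation floor \(\gtrsim M^{-2}\) for every \(M\)-region class. At the balanced \(M\asymp (n/J)^{1/3}\), this floor is itself of order \((J/n)^{2/3}\), which keeps the hard benchmark on the \(n^{-2/3}\) scale.

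Part B is a parametric argument. With \(\alpha^\star\) known, the model \(\sigma(b_\psi(x)+\sum_k\alpha_k^\star(x)\beta_k^\top z)\) is a GLM that is linear in \(\theta=(\beta_1,\dots,\beta_K,\psi)\in\mathbb{R}^{KJ+p}\), with design \((\alpha^\star(x)\otimes z,\ \text{baseline features})\). Realizability makes it correctly specified. The regularity assumptions give a nonsingular Fisher information \(\mathcal{I}\), bounded features, and a compact parameter set, and from them I would take the standard MLE risk bound \(\E\|\hat\theta-\theta^\star\|_2^2\le \tr(\mathcal{I}^{-1})/n+o(1/n)=O((KJ+p)/n)\). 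This can be made nonasymptotic via local strong convexity of the logistic loss together with a sub-Gaussian score bound.

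I would then use Proposition~\ref{thm:ot_decomposition} with \(\alpha=\alpha^\star\). The gate term vanishes, and \(\E\|\hat w_{\mathrm{soft}}-w^\star\|^2\le 2\sum_k\E[\alpha_k^\star]\|\hat\beta_k-\beta_k^\star\|^2\le 2\|\hat\theta-\theta^\star\|^2\), which yields \(O((KJ+p)/n)\).

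I expect the main obstacle to be the uniformity in Part A. Cells shrink as \(M\) grows, so the per-cell Fisher information must stay bounded below for cells of vanishing width, and there must be enough samples in every cell; I would handle the latter with a Chernoff bound plus a truncation on the low-probability event of sparse cells. Nonexistence of the MLE under separation also has to be ruled out, which I would do by working with a ridge-penalized or truncated estimator on a compact set.
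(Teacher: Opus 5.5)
Your proposal is correct and follows the paper's proof essentially step for step. It uses a cell-wise logistic MLE on a regular $M$-bin partition, with an $O(M^{-2})$ Lipschitz approximation term and an $O(MJ/n)$ per-cell estimation term balanced at $M\asymp(n/J)^{1/3}$. It obtains the projected-overlap floor by applying the scalar step-function bound behind Corollary~\ref{thm:quant_overlap_floor} to $u^\top w^\star$. It treats the oracle-gate soft model as a correctly specified $(KJ+p)$-parameter logistic regression on the augmented features $(\varphi(x),\alpha^\star(x)\otimes z(x,d))$.

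The main difference is the gap between each cell's logistic pseudo-truth and $\E[w^\star(X)\mid X\in I_m]$. You derive that this gap is $O(1/M)$ by a score expansion, which also needs $b^\star$ smooth enough for the per-cell baseline to absorb it. The paper instead assumes it as its pseudo-truth compatibility condition $\E\|w_M^\circ(X)-w^\star(X)\|_2^2\le C_{\mathrm{app}}/M^2$, and likewise leaves the cell-count and MLE-existence issues you flag to its regularity assumptions.
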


Theorem~\ref{thm:rate_separation} isolates the source of the rate gap. A hard partition must choose how many regions to use: more regions reduce approximation error, but leave fewer samples per region. The oracle-gate soft benchmark avoids this particular tradeoff because the gate already represents the overlap, so estimation is fixed-dimensional. Appendix~\ref{app:rate_separation_proof} gives the hard-side bounds, the logistic-MLE calculation, the crossover threshold, and the corresponding local result for jointly trained gates and experts. In that local result, once the arbitrary expert labels are aligned and the estimator is in a regular neighborhood of the target solution, standard mixture-of-experts likelihood theory~\citep{jiang1999hierarchical} gives the parametric scale \(O((KJ+p_{\mathrm{loc}})/n)\), where \(p_{\mathrm{loc}}\) counts gate and baseline parameters. Appendix~\ref{app:margin_transfer_proof} then links weight error to decision regret for the finite evaluation libraries used in the experiments.

%% file: sections/05_experiments_brief.tex
\section{Controlled Experiments}
\label{sec:experiments}

The experiments ask when output-targeted, decision-ready soft segmentation improves optimizer-facing weight learning relative to pooled, exogenous, and hard-segmented alternatives. The controlled benchmarks provide the mechanism evidence; Complete Journey serves as a retail-style bridge with real covariates and action geometry. Broader sweeps and uncertainty summaries are in the appendix.

\subsection{Benchmark and representative comparison}
\label{subsec:exp_design}

We use two controlled benchmark families. The first is a two-expert setup aligned with the theory, where the true personalized weight is
\[
w^\star(x)
=
\lambda^\star(x)\beta_1^\star
+
\bigl(1-\lambda^\star(x)\bigr)\beta_2^\star,
\qquad
\lambda^\star(x)=\sigma\!\bigl(\tau\,u^\top x_{\mathrm{sig}}\bigr),
\]
so overlap is controlled directly through \(\tau\).In this benchmark, we vary sample size and nuisance variation in the raw context features. Because all methods are evaluated on the same finite decision library, both weight error and regret can be computed by enumeration. The second family is a matched hard-versus-soft benchmark: it keeps the expert bank, gate-score family, and logged-output pipeline fixed while changing only whether the latent truth is hard-routed or softly mixed:
\[
w^\star(x)=\beta_{h^\star(x)}
\qquad\text{or}\qquad
w^\star(x)=\sum_{k=1}^K \alpha_k^\star(x)\beta_k^\star,
\]
where \(h^\star:\mathcal X\to[K]\) is the hard routing map.

Across these benchmarks we compare pooled fitting, direct contextual baselines, exogenous cluster-then-fit segmentation, EM mixture regression, matched hard routing, and OTSS. The direct contextual baselines are linear, low-rank, and MLP maps from raw context to decision weights.EM mixture regression is the main classical soft-mixture comparator: it fits mixture components for the proxy-output model and combines their coefficients using posterior responsibilities. OTSS instead learns the mixture weights and expert coefficients as the decision-weight model itself. Matched hard routing is the one-hot-gate analogue of OTSS.
Detailed benchmark construction and shared training choices are deferred to Appendices~\ref{app:benchmark_summary} and~\ref{app:training_protocol}.

\Cref{tab:main_results} gives the representative numeric anchors. Panel~A is the primary overlap benchmark; Panel~B is the matched hard-versus-soft control.

\begin{table}[t]
\centering
\scriptsize
\renewcommand{\arraystretch}{0.85}
\setlength{\tabcolsep}{4pt}
\textit{Panel A: target-aligned overlap ($\tau=1.2$, nuisance scale $0.5$, eight seeds)}\\[-0.1em]
\begin{tabular*}{\linewidth}{@{\extracolsep{\fill}}lcc@{}}
\toprule
\textbf{Method} &
\textbf{regret} &
\textbf{MSE} \\
\midrule
pooled & $0.223\pm0.090$ & $0.794\pm0.015$ \\
linear contextual & $0.124\pm0.102$ & $0.554\pm0.282$ \\
low-rank contextual & $0.042\pm0.032$ & $0.209\pm0.141$ \\
MLP contextual & $0.078\pm0.043$ & $0.434\pm0.213$ \\
cluster-then-fit & $0.188\pm0.083$ & $0.685\pm0.142$ \\
EM mixture regression & $0.026\pm0.018$ & $0.138\pm0.086$ \\
matched hard routing & $0.128\pm0.121$ & $0.781\pm0.264$ \\
\textbf{OTSS} & $\mathbf{0.014\pm0.008}$ & $\mathbf{0.120\pm0.047}$ \\
\bottomrule
\end{tabular*}

\textit{Panel B: matched $K=5$ hard-versus-soft benchmark ($\mathrm{rand}=1.2$, eight seeds)}\\[-0.1em]
\begin{tabular*}{\linewidth}{@{\extracolsep{\fill}}lccc@{}}
\toprule
\textbf{Method} &
\textbf{Hard regret} &
\textbf{Soft regret} &
\textbf{High-\(n\) soft regret} \\
\midrule
pooled & $1.470\pm0.099$ & $\mathbf{0.602\pm0.101}$ & $0.567\pm0.052$ \\
linear contextual & $1.721\pm0.024$ & $0.911\pm0.022$ & $0.867\pm0.016$ \\
low-rank contextual & $1.480\pm0.078$ & $0.678\pm0.073$ & $0.634\pm0.063$ \\
cluster-then-fit & $1.497\pm0.094$ & $0.611\pm0.100$ & $0.578\pm0.058$ \\
EM mixture regression & $1.507\pm0.118$ & $0.621\pm0.022$ & $0.558\pm0.059$ \\
matched hard routing & $1.592\pm0.077$ & $0.722\pm0.084$ & $0.610\pm0.056$ \\
\textbf{OTSS} & $\mathbf{1.458\pm0.040}$ & $0.605\pm0.062$ & $\mathbf{0.553\pm0.039}$ \\
\bottomrule
\end{tabular*}
\caption{Representative benchmark anchors (mean \(\pm\) sd over eight seeds; bold is best point estimate). Panel~A reports the target-aligned overlap benchmark; Panel~B reports the matched \(K=5\) hard-versus-soft benchmark. Full paired bootstraps are in Appendix~\ref{app:bootstrap_summary}.}
\label{tab:main_results}
\end{table}

Panel~A shows the main overlap result. OTSS has the lowest mean regret, including a paired-bootstrap improvement over EM mixture regression (OTSS-minus-EM $-0.012$ $[-0.024,-0.003]$), and the best MSE point estimate. The MSE gap to EM is favorable but not bootstrap-separated. Panel~B serves as the matched control. OTSS is best under hard truth and high-\(n\) soft truth; under finite-\(n\) soft truth, it is within \(0.003\) of pooled, so we read that column as a near-tie on the regret frontier rather than a reversal.

\paragraph{Mechanism sweeps.}
\Cref{fig:main_sweeps} varies three experimental knobs: how much data are available, how sharply the latent weights switch between experts, and how much irrelevant context variation distracts clustering methods. For readability, the figure plots four structural methods; \Cref{tab:main_results} reports the full comparator set. OTSS improves steadily with more data, remains strong when the truth is softly mixed rather than cleanly one-hot, and is less distracted by nuisance context variation than cluster-then-fit. These patterns match the intended use case: soft segmentation is most useful when different contexts need different trade-offs, but the differences are not clean hard clusters.

\begin{figure}[t]
\centering
\includegraphics[width=0.72\linewidth]{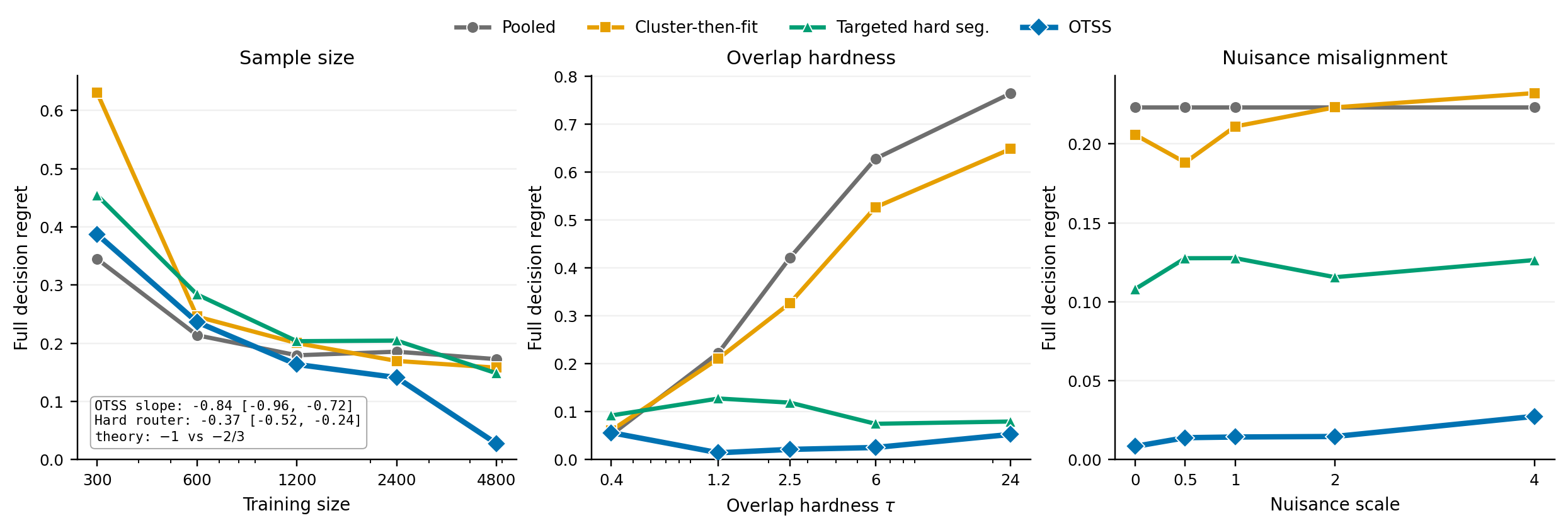}
\caption{Theorem-aligned mechanism sweeps for four structural methods (eight seeds; mean regret); \Cref{tab:main_results} reports the full comparator set.}
\label{fig:main_sweeps}
\end{figure}

EM mixture regression is the closest classical soft-mixture reference. In Panel~A, OTSS achieves lower regret than EM by paired bootstrap, while the two methods are close on coefficient MSE. In the richer \(K\!=\!5\) matched benchmark, their regret point estimates are also close. The difference is that OTSS returns the optimizer-facing coefficient vector directly and is much faster in our implementation: Appendix~\ref{app:runtime_comparison} reports roughly \(101\times\) speedup over EM in Panel~A and roughly \(230\times\) in the \(K\!=\!5\) benchmark.
\subsection{Retail bridge: Complete Journey}
\label{subsec:complete_journey}

Complete Journey tests whether the same ordering persists when contexts and feasible actions come from a public retail panel rather than a fully synthetic design. We use real household covariates and breakfast-bundle action geometry from the dunnhumby Complete Journey panel~\citep{dunnhumby_complete_journey}, while retaining a constructed latent decision-weight layer and fixed evaluation library so benchmark regret can be computed exactly.

In the main Complete Journey configuration $(\tau=1.2,\ \mathrm{rand}=0.8,\ n_{\mathrm{total}}=8{,}000,\ n_{\mathrm{train}}=5{,}000)$, OTSS has the lowest mean-regret point estimate, $0.111\pm0.041$, compared with $0.172\pm0.083$ for low-rank and $0.198\pm0.191$ for EM. The paired bootstrap gives a sharper comparison: OTSS improves over low-rank by $-0.060$ $[-0.111,-0.018]$ and over the other non-EM baselines with intervals excluding zero. The EM comparison points in the same direction, $-0.085$ $[-0.227,0.008]$, but the interval includes zero, so we treat it as directional rather than statistically separated.

%% file: sections/09_appendix_notes.tex
\section{Proofs, implementation notes, robustness studies, and scope clarifications}
\label{app:implementation_details}

This appendix collects proof details, implementation notes, benchmark-specific robustness studies, and scope clarifications that support the main text without interrupting its narrative flow. Subsections~\ref{app:quant_floor_proof}--\ref{app:margin_transfer_proof} record the theory-facing material; the remaining subsections collect benchmark construction, uncertainty summaries, robustness notes, and scope clarifications for the experimental results.

\subsection{Quantitative overlap floor proof}
\label{app:quant_floor_proof}

We first prove \Cref{thm:quant_overlap_floor}. Let
\[
\Delta:=\beta_1^\star-\beta_2^\star,
\qquad
w^\star(x)=\beta_2^\star+\alpha^\star(x)\Delta.
\]
Consider any $w\in\mathcal{W}^{\mathrm{hard}}_M$, so there exists a partition of $[0,1]$ into at most $M$ intervals and a constant vector $c_j\in\mathbb{R}^J$ on each cell $J_j$. Write
\[
c_j=\beta_2^\star+t_j\Delta+u_j,
\qquad
u_j\perp \Delta.
\]
Then
\[
\|w^\star(x)-c_j\|_2^2
=
\|(\alpha^\star(x)-t_j)\Delta-u_j\|_2^2
=
\|\Delta\|_2^2(\alpha^\star(x)-t_j)^2+\|u_j\|_2^2,
\]
so the orthogonal term can only increase the error. Therefore the best $M$-interval hard approximation lies on the line segment spanned by $\beta_1^\star$ and $\beta_2^\star$, and
\[
\inf_{w\in\mathcal{W}^{\mathrm{hard}}_M}
\E\!\left[\|w^\star(X)-w(X)\|_2^2\right]
=
\|\Delta\|_2^2
\inf_{g\in\mathcal{S}_M}
\E\!\left[(\alpha^\star(X)-g(X))^2\right],
\]
where $\mathcal{S}_M$ denotes scalar step functions on at most $M$ intervals.

Now restrict attention to the overlap interval $I=[a,b]$. Let $J\subset I$ be any subinterval of length $h$, and let $U\sim\mathrm{Unif}(J)$. Define
\[
\phi(x):=\alpha^\star(x)-\kappa x.
\]
The assumption
\[
\alpha^\star(y)-\alpha^\star(x)\ge \kappa(y-x)
\qquad\text{for }x<y\text{ in }I
\]
implies that $\phi$ is nondecreasing on $I$. Since $U$ and $\phi(U)$ are comonotone,
\[
\Cov(U,\phi(U))\ge 0.
\]
Hence
\[
\Var(\alpha^\star(U))
=
\Var(\kappa U+\phi(U))
\ge
\kappa^2\Var(U)
=
\kappa^2 h^2/12.
\]
Because the best constant fit on $J$ is the interval mean,
\[
\inf_{c\in\mathbb{R}}
\int_J (\alpha^\star(x)-c)^2\,dx
=
h\,\Var(\alpha^\star(U))
\ge
\kappa^2 h^3/12.
\]

Let the partition induced on $I$ have cells $J_1,\ldots,J_N$ with lengths $h_1,\ldots,h_N$, where $N\le M$ and $\sum_{j=1}^N h_j=b-a$. Summing the previous bound over these cells gives
\[
\int_I (\alpha^\star(x)-g(x))^2\,dx
\ge
\frac{\kappa^2}{12}\sum_{j=1}^N h_j^3.
\]
By Jensen's inequality,
\[
\sum_{j=1}^N h_j^3
\ge
N\Bigl(\frac{b-a}{N}\Bigr)^3
=
\frac{(b-a)^3}{N^2}
\ge
\frac{(b-a)^3}{M^2}.
\]
Therefore
\[
\inf_{g\in\mathcal{S}_M}
\E\!\left[(\alpha^\star(X)-g(X))^2\right]
\ge
\frac{\kappa^2(b-a)^3}{12M^2},
\]
which proves
\[
\inf_{w\in\mathcal{W}^{\mathrm{hard}}_M}
\E\!\left[\|w^\star(X)-w(X)\|_2^2\right]
\ge
\frac{\|\beta_1^\star-\beta_2^\star\|_2^2\kappa^2(b-a)^3}{12M^2}.
\]

If $\alpha^\star$ is $L$-Lipschitz, partition $[0,1]$ into $M$ equal intervals and let $g_M$ take the midpoint value of $\alpha^\star$ on each interval. On each interval of width $1/M$,
\[
|\alpha^\star(x)-g_M(x)|\le \frac{L}{2M},
\]
so
\[
\E\!\left[(\alpha^\star(X)-g_M(X))^2\right]\le \frac{L^2}{4M^2}.
\]
Multiplying by $\|\beta_1^\star-\beta_2^\star\|_2^2$ yields the stated upper bound and shows the $\Theta(M^{-2})$ rate is sharp.

\subsection{Pooled and aligned-hard regime results and proofs}
\label{app:regime_map_proofs}

\begin{proposition}[Pooled oracle error in the two-expert family]
\label{prop:pooled_oracle_two_expert}
Let
\[
\mathcal{W}^{\mathrm{pool}}
=
\{w:[0,1]\to\mathbb{R}^J : w(x)\equiv \beta \text{ for some } \beta\in\mathbb{R}^J\},
\]
and write $\Delta\beta^\star:=\beta_1^\star-\beta_2^\star$. Then
\[
\inf_{w\in\mathcal{W}^{\mathrm{pool}}}
\E\!\left[\|w^\star(X)-w(X)\|_2^2\right]
=
\|\Delta\beta^\star\|_2^2\,\Var\!\bigl(\alpha^\star(X)\bigr).
\]
\end{proposition}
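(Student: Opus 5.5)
The plan is to reduce the pooled problem to a best-constant (mean-squared) fit, use the orthogonal decomposition along \(\Delta\beta^\star\) exactly as in the proof of \Cref{thm:quant_overlap_floor}, and then identify the minimizer as the mean. We work in the two-expert family \(w^\star(x)=\beta_2^\star+\alpha^\star(x)\Delta\beta^\star\) with \(X\sim\mathrm{Unif}[0,1]\). Measurability of \(\alpha^\star\) follows from monotonicity, and \(\alpha^\star\in[0,1]\), so every second moment below is finite.

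\emph{Step 1 (bias--variance identity).} For any constant \(\beta\in\mathbb{R}^J\), expand around the mean \(\bar w:=\E[w^\star(X)]\):
\[
\E\bigl[\|w^\star(X)-\beta\|_2^2\bigr]=\E\bigl[\|w^\star(X)-\bar w\|_2^2\bigr]+\|\bar w-\beta\|_2^2 .
\]
The cross term vanishes because \(\E[w^\star(X)-\bar w]=0\). The infimum over \(\mathcal{W}^{\mathrm{pool}}\) is therefore attained at \(\beta=\bar w\) and equals \(\tr\,\Var(w^\star(X))\). Alternatively, this is the special case of \Cref{thm:hard_partition_decomposition} with the trivial partition \(C\equiv 1\), for which \(\mathcal{H}(C)=\tr\,\Var(w^\star(X))\).

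\emph{Step 2 (evaluate the trace).} Since \(w^\star(X)-\bar w=(\alpha^\star(X)-\E\alpha^\star(X))\Delta\beta^\star\), we get
\[
\|w^\star(X)-\bar w\|_2^2=\|\Delta\beta^\star\|_2^2\,(\alpha^\star(X)-\E\alpha^\star(X))^2 .
\]
Taking expectations gives \(\|\Delta\beta^\star\|_2^2\Var(\alpha^\star(X))\), which is the claim.

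As a cross-check, one can instead write \(\beta=\beta_2^\star+t\Delta\beta^\star+u\) with \(u\perp\Delta\beta^\star\), as in \Cref{app:quant_floor_proof}. The error then splits into \(\|\Delta\beta^\star\|_2^2\E[(\alpha^\star(X)-t)^2]+\|u\|_2^2\), which is minimized at \(u=0\) and \(t=\E\alpha^\star(X)\). This gives the same value.

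The argument has no real obstacle. The only point needing care is the cross-term cancellation in Step 1, which relies on finite second moments of \(w^\star(X)\); these are guaranteed by boundedness of \(\alpha^\star\). The uniform distribution and the monotonicity of \(\alpha^\star\) are not needed beyond ensuring measurability.
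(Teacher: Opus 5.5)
Your proposal is correct and follows essentially the same route as the paper: write $w^\star=\beta_2^\star+\alpha^\star\Delta\beta^\star$, identify the optimal pooled coefficient as $\E[w^\star(X)]$, and compute the residual as $(\alpha^\star(X)-\E[\alpha^\star(X)])\Delta\beta^\star$. Your Step~1 bias--variance identity makes explicit the projection-onto-constants fact, which the paper only asserts.
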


\begin{proof}[Proof of \Cref{prop:pooled_oracle_two_expert}]
Write
\[
w^\star(X)=\beta_2^\star+\alpha^\star(X)\Delta\beta^\star,
\qquad
\Delta\beta^\star:=\beta_1^\star-\beta_2^\star.
\]
The best pooled coefficient is therefore
\[
\beta_{\mathrm{pool}}^\dagger
=
\E[w^\star(X)]
=
\beta_2^\star+\E[\alpha^\star(X)]\Delta\beta^\star.
\]
Hence
\[
w^\star(X)-\beta_{\mathrm{pool}}^\dagger
=
\bigl(\alpha^\star(X)-\E[\alpha^\star(X)]\bigr)\Delta\beta^\star,
\]
so
\[
\E\!\left[\|w^\star(X)-\beta_{\mathrm{pool}}^\dagger\|_2^2\right]
=
\|\Delta\beta^\star\|_2^2\,\Var\!\bigl(\alpha^\star(X)\bigr).
\]
Since $\beta_{\mathrm{pool}}^\dagger$ is the projection of $w^\star$ onto the constant class, this equals the oracle pooled approximation error.
\end{proof}

\begin{proposition}[Aligned hard upper bound near one-hot truth]
\label{prop:aligned_hard_upper}
Let $C_{1/2}(x):=\mathbf{1}\{\alpha^\star(x)\ge 1/2\}$ and let $\mathcal{W}_{C_{1/2}}$ be the associated hard class. Then
\[
\inf_{w\in\mathcal{W}_{C_{1/2}}}
\E\!\left[\|w^\star(X)-w(X)\|_2^2\right]
\le
\|\Delta\beta^\star\|_2^2\,
\E\!\left[\min\!\left\{(\alpha^\star(X))^2,\,(1-\alpha^\star(X))^2\right\}\right].
\]
Consequently, for every $\varepsilon\in(0,1/2)$,
\[
\inf_{w\in\mathcal{W}_{C_{1/2}}}
\E\!\left[\|w^\star(X)-w(X)\|_2^2\right]
\le
\|\Delta\beta^\star\|_2^2
\left(
\varepsilon^2+\frac14\,\Prob\!\bigl(\alpha^\star(X)\in[\varepsilon,1-\varepsilon]\bigr)
\right).
\]
\end{proposition}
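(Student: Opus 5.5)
The plan is to bound the infimum over $\mathcal{W}_{C_{1/2}}$ by evaluating one explicit member of the class. The natural candidate assigns $\beta_1^\star$ on the cell $\{\alpha^\star\ge 1/2\}$ and $\beta_2^\star$ on the complementary cell; call it $\tilde w(x):=\beta_1^\star\mathbf{1}\{\alpha^\star(x)\ge 1/2\}+\beta_2^\star\mathbf{1}\{\alpha^\star(x)<1/2\}$. This function is constant on each cell of $C_{1/2}$, so it belongs to $\mathcal{W}_{C_{1/2}}$. Equivalently, one could invoke Proposition~\ref{thm:hard_partition_decomposition}: the conditional-mean projection $w_{C_{1/2}}^\dagger$ attains the infimum $\mathcal{H}(C_{1/2})$, so any member of the class gives an upper bound.

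Next I would compute the pointwise error of $\tilde w$, writing $w^\star(x)=\beta_2^\star+\alpha^\star(x)\Delta\beta^\star$ as in the proof of Proposition~\ref{prop:pooled_oracle_two_expert}.
\begin{itemize}
\item On $\{\alpha^\star\ge1/2\}$ we have $w^\star(x)-\beta_1^\star=-(1-\alpha^\star(x))\Delta\beta^\star$, with squared norm $\|\Delta\beta^\star\|_2^2(1-\alpha^\star(x))^2$.
\item On $\{\alpha^\star<1/2\}$ the squared error is $\|\Delta\beta^\star\|_2^2(\alpha^\star(x))^2$.
\end{itemize}
In each case the chosen side is exactly the one on which the corresponding quantity is the smaller of $(\alpha^\star)^2$ and $(1-\alpha^\star)^2$. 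The reason is that $\alpha^\star\ge1/2$ if and only if $1-\alpha^\star\le\alpha^\star$. So the pointwise error equals $\|\Delta\beta^\star\|_2^2\min\{(\alpha^\star)^2,(1-\alpha^\star)^2\}$, and taking expectations gives the first inequality.

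For the consequence, fix $\varepsilon\in(0,1/2)$ and split on the event $E=\{\alpha^\star(X)\in[\varepsilon,1-\varepsilon]\}$.
\begin{itemize}
\item Off $E$, either $\alpha^\star<\varepsilon$ or $1-\alpha^\star<\varepsilon$, so the minimum is at most $\varepsilon^2$.
\item On $E$, the minimum is at most $1/4$, since $\min\{a,1-a\}\le 1/2$ for every $a\in[0,1]$.
\end{itemize}
Hence $\E[\min\{\cdot\}]\le\varepsilon^2\Prob(E^c)+\tfrac14\Prob(E)\le\varepsilon^2+\tfrac14\Prob(E)$.

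There is no real obstacle in this argument. The only point needing care is to confirm that the hard-class definition $\mathcal{W}_C$ allows arbitrary constants per cell, so that $\tilde w$ is admissible, and that the two cells are measurable. Measurability holds because $\alpha^\star$ is measurable.
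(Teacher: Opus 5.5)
Your proposal is correct and follows essentially the same route as the paper's proof. It uses the same explicit member $\tilde w$ of $\mathcal{W}_{C_{1/2}}$ and the same pointwise identity $\|w^\star(x)-\tilde w(x)\|_2^2=\|\Delta\beta^\star\|_2^2\min\{(\alpha^\star(x))^2,(1-\alpha^\star(x))^2\}$. Your case split on $E$ is the paper's pointwise bound $\min\{a^2,(1-a)^2\}\le\varepsilon^2+\frac14\mathbf{1}\{a\in[\varepsilon,1-\varepsilon]\}$, written out as two cases.
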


\begin{proof}[Proof of \Cref{prop:aligned_hard_upper}]
For the oracle-aligned hard class used in Proposition~\ref{prop:aligned_hard_upper}, consider the predictor
\[
\tilde w(x)
:=
\beta_1^\star \mathbf{1}\{\alpha^\star(x)\ge 1/2\}
+
\beta_2^\star \mathbf{1}\{\alpha^\star(x)<1/2\},
\]
which belongs to $\mathcal{W}_{C_{1/2}}$. If $\alpha^\star(x)\ge 1/2$, then
\[
w^\star(x)-\tilde w(x)
=
\bigl(1-\alpha^\star(x)\bigr)(\beta_2^\star-\beta_1^\star),
\]
while if $\alpha^\star(x)<1/2$, then
\[
w^\star(x)-\tilde w(x)
=
\alpha^\star(x)(\beta_1^\star-\beta_2^\star).
\]
Therefore, pointwise,
\[
\|w^\star(x)-\tilde w(x)\|_2^2
=
\|\Delta\beta^\star\|_2^2
\min\!\left\{(\alpha^\star(x))^2,\,(1-\alpha^\star(x))^2\right\}.
\]
Taking expectations gives
\[
\begin{aligned}
\inf_{w\in\mathcal{W}_{C_{1/2}}}
\E\!\left[\|w^\star(X)-w(X)\|_2^2\right]
&\le
\E\!\left[\|w^\star(X)-\tilde w(X)\|_2^2\right] \\
&\le
\|\Delta\beta^\star\|_2^2
\E\!\left[\min\!\left\{(\alpha^\star(X))^2,\,(1-\alpha^\star(X))^2\right\}\right].
\end{aligned}
\]
For the second claim, note that for any $a\in[0,1]$ and any $\varepsilon\in(0,1/2)$,
\[
\min\{a^2,(1-a)^2\}
\le
\varepsilon^2+\frac14\,\mathbf{1}\{a\in[\varepsilon,1-\varepsilon]\}.
\]
Applying this with $a=\alpha^\star(X)$ and taking expectations yields
\[
\E\!\left[\min\!\left\{(\alpha^\star(X))^2,\,(1-\alpha^\star(X))^2\right\}\right]
\le
\varepsilon^2+\frac14\,\Prob\!\bigl(\alpha^\star(X)\in[\varepsilon,1-\varepsilon]\bigr),
\]
which proves the stated upper bound.
\end{proof}

\subsection{Rate separation proof}
\label{app:rate_separation_proof}

\begin{theorem}[Local joint estimation under regular soft truth]
\label{thm:local_joint_rate}
Consider the OTSS logistic model from Section~\ref{sec:otmoe}. Let
\[
\Theta=(\phi,\psi,\beta_1,\ldots,\beta_K),
\qquad
p_{\mathrm{loc}}:=\dim(\phi,\psi),
\]
and suppose the logged data are generated by some
\[
\Theta^\star=(\phi^\star,\psi^\star,\beta_1^\star,\ldots,\beta_K^\star)
\]
such that
\[
\Prob(Y=1\mid x,d)
=
\sigma\!\left(
b_{\psi^\star}(x)+\sum_{k=1}^K \alpha_k(x;\phi^\star)\,{\beta_k^\star}^\top z(x,d)
\right).
\]
Assume:
\begin{enumerate}[label=(\roman*)]
\item \textbf{Finite-dimensional regular soft model.}
The maps $\phi\mapsto \alpha(x;\phi)$ and $\psi\mapsto b_\psi(x)$ are twice continuously differentiable on an open neighborhood of $(\phi^\star,\psi^\star)$.

\item \textbf{Bounded design and smooth score.}
The decision factors $z(x,d)$ are almost surely bounded, and the first and second derivatives of the conditional log-likelihood with respect to $\Theta$ are dominated by an integrable envelope on a neighborhood of $\Theta^\star$.

\item \textbf{Local identification up to label permutation.}
There exists a neighborhood $U$ of the equivalence class of $\Theta^\star$ such that, after relabeling experts and gate outputs by the best permutation, the population logistic risk has quadratic growth:
\[
\mathcal{L}_{\rm pop}(\widetilde\Theta)-\mathcal{L}_{\rm pop}(\Theta^\star)
\ge
c_0\|\widetilde\Theta-\Theta^\star\|_2^2
\qquad\text{for all }\Theta\in U,
\]
for some $c_0>0$, where $\widetilde\Theta$ denotes the relabeling of $\Theta$ minimizing Euclidean distance to $\Theta^\star$.

\item \textbf{Regular local curvature.}
The population Hessian of $\mathcal{L}_{\rm pop}$ at $\Theta^\star$ is finite and nonsingular on the identified tangent directions.

\item \textbf{Localized estimator.}
Let $\hat\Theta$ be a local minimizer of the regularized empirical loss $\mathcal{L}(\Theta)$ from Section~\ref{sec:otmoe}, with a twice differentiable penalty and regularization level $\lambda=O(n^{-1})$, and suppose $\hat\Theta\in U$.
\end{enumerate}
Then, after optimal label alignment,
\[
\E\!\left[
\|\widetilde\phi-\phi^\star\|_2^2
+
\|\widetilde\psi-\psi^\star\|_2^2
+
\sum_{k=1}^K \|\widetilde\beta_k-\beta_k^\star\|_2^2
\;\middle|\;
\hat\Theta\in U
\right]
\le
C\,\frac{KJ+p_{\mathrm{loc}}}{n},
\]
for a constant $C$ depending only on the local smoothness and curvature constants.
\end{theorem}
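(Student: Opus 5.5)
We follow the standard M-estimation route for a finite-dimensional, locally identified parametric model. Throughout we work with the aligned parameter \(\widetilde\Theta\) and write \(D:=KJ+p_{\mathrm{loc}}=\dim\Theta\). Because the risk, likelihood and penalty are invariant under label permutations, we may replace \(\hat\Theta\) by \(\widetilde{\hat\Theta}\) without loss of generality. After this replacement \(\hat\Theta\) is again a local minimizer in \(U\), and on \(U\) the target \(\Theta^\star\) is the unique minimizer of \(\mathcal{L}_{\rm pop}\) up to relabeling, by (iii).

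\textbf{Step 1 (basic inequality).} Let \(\mathcal{L}_n\) denote the unpenalized empirical logistic loss. Local optimality of \(\hat\Theta\) gives the first-order condition
\[
\nabla\mathcal{L}_n(\hat\Theta)+\lambda\nabla\mathcal{R}(\hat\Theta)=0 .
\]
We Taylor expand \(\nabla\mathcal{L}_n\) around \(\Theta^\star\) with an integral-form remainder:
\[
0=\nabla\mathcal{L}_n(\Theta^\star)+\bar H_n(\hat\Theta-\Theta^\star)+\lambda\nabla\mathcal{R}(\hat\Theta),
\qquad
\bar H_n=\int_0^1\nabla^2\mathcal{L}_n\bigl(\Theta^\star+s(\hat\Theta-\Theta^\star)\bigr)\,ds .
\]
Under (i) and (ii) the map \(\Theta\mapsto\nabla^2\ell\) is continuous and dominated by an integrable envelope. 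A uniform law of large numbers on a compact subneighborhood of \(U\) then gives \(\bar H_n\to\bar H\) uniformly. Here \(\bar H\) is the averaged population Hessian. It is bounded below by a multiple of the identity on the identified directions, using (iv) near \(\Theta^\star\) together with the quadratic growth (iii), shrinking \(U\) if necessary.

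\textbf{Step 2 (score variance).} At the truth the model is correctly specified, so the score has mean zero:
\[
\E\,\nabla\ell(\Theta^\star)=0 .
\]
For logistic models the score is \((y-p)\,\partial_\Theta\eta\), where \(\eta\) is the linear predictor \(b_\psi(x)+\sum_k\alpha_k(x;\phi)\beta_k^\top z(x,d)\). This is bounded by (ii), since \(\alpha_k\in[0,1]\), \(z\) is bounded, and the derivatives of \(\alpha\) and \(b\) are bounded near \(\phi^\star,\psi^\star\). Hence
\[
\E\|\nabla\mathcal{L}_n(\Theta^\star)\|_2^2=\frac{\tr\mathcal{I}(\Theta^\star)}{n}\le \frac{c_1 D}{n},
\]
since the trace of a \(D\times D\) Fisher matrix with bounded entries is \(O(D)\). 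The penalty contributes \(\lambda\|\nabla\mathcal{R}\|=O(n^{-1})\), which is negligible after squaring.

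\textbf{Step 3 (combine).} Inverting the linearization on the event \(\{\hat\Theta\in U\}\cap\{\bar H_n\succeq (c_0/2)I\}\) gives
\[
\|\hat\Theta-\Theta^\star\|_2\le \frac{2}{c_0}\Bigl(\|\nabla\mathcal{L}_n(\Theta^\star)\|_2+O(n^{-1})\Bigr).
\]
Squaring and taking conditional expectations yields the bound \(C D/n\). This requires showing that the complementary event, where the empirical Hessian fails to be well conditioned, contributes only \(O(n^{-1})\) to the conditional expectation. That follows because \(U\) is bounded, so the squared error is bounded there, and a matrix Bernstein/Hoeffding bound makes the bad-curvature probability exponentially small. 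Dividing by \(\Prob(\hat\Theta\in U)\) keeps the constant under control provided this probability is bounded away from zero. We would add that condition, or note that it follows from consistency, as was implicitly assumed.

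\textbf{Main obstacle.} The hard step is the uniform curvature argument in Step 1: transferring the population quadratic growth and Hessian nonsingularity (iii)–(iv) to the random empirical Hessian along the whole segment from \(\Theta^\star\) to \(\hat\Theta\), uniformly over \(U\). Label switching and the softmax gate's translation invariance create flat directions that must be quotiented out. Our first attempt would be to fix a gate normalization (for example, the last gate's logit set to zero) so that (iv) holds on the full reduced space. We would then invoke the classical mixture-of-experts asymptotics of \citet{jiang1999hierarchical} for the uniform convergence. If uniformity proves delicate, the fallback is to obtain the rate in probability and upgrade it to expectation by boundedness of \(U\).
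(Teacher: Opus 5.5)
Your proposal follows the same route as the paper's proof sketch: the first-order condition at the label-aligned local minimizer, a Taylor expansion of the gradient about $\Theta^\star$, a mean-zero score with $\E\|\nabla\mathcal{L}_n(\Theta^\star)\|_2^2=\tr\mathcal{I}(\Theta^\star)/n=O((KJ+p_{\mathrm{loc}})/n)$, uniform closeness of empirical and population Hessians, a bounded inverse on the identified directions from (iii)--(iv), and an $O(n^{-1})$ penalty contribution. The paper simply asserts the final move (``$O_p(\cdot)$, and hence the conditional squared-error bound''). You correctly see that this move needs two things. First, a lower bound on $\Prob(\hat\Theta\in U)$, which the statement lacks: a data-dependent choice of which local minimizer to report could make $\{\hat\Theta\in U\}$ favor large-score samples. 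Second, control of the event where the empirical Hessian is badly conditioned.

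That second piece is the one step of your plan that does not go through as written. Assumption (ii) gives only an integrable envelope. It does not give the uniform-in-$x$ boundedness of $\nabla\alpha$ and $\nabla b$ that you also invoke in Step 2. So matrix Bernstein/Hoeffding, which needs bounded or sub-exponential summands, is not available. Moreover, $\bar H_n$ is evaluated along a random segment, so a pointwise concentration bound is not enough.

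Your fallback is also invalid. A rate in probability plus boundedness of $U$ does not yield $\E[\|\widetilde\Theta-\Theta^\star\|_2^2\mid\hat\Theta\in U]=O(1/n)$. An $O_p$ statement says nothing about how fast the exceptional probability vanishes, and you need $\mathrm{diam}(U)^2$ times that probability to be $O(1/n)$.

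A repair has two parts. First, take $U$ to be a small convex ball on which $\nabla^2\mathcal{L}_{\rm pop}\succeq c_0 I$. This is allowed because (iii) is existential. It is also needed, because quadratic growth on a large $U$ does not exclude other critical points of $\mathcal{L}_{\rm pop}$, near which $\mathcal{L}_n$ can have spurious local minimizers. Second, assume square-integrable envelopes for the per-sample Hessian and for its Lipschitz modulus in $\Theta$. Chebyshev at $\Theta^\star$ plus the Lipschitz bound then gives $\Prob\bigl(\sup_{U}\|\nabla^2\mathcal{L}_n-\nabla^2\mathcal{L}_{\rm pop}\|>c_0/2\bigr)=O(1/n)$. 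The bad event therefore contributes $\mathrm{diam}(U)^2\cdot O(1/n)$, and your Step 3 closes. The result is the paper's bound with these extra hypotheses made explicit.
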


\begin{corollary}[Weight error for the joint estimator]
\label{cor:local_joint_weight_rate}
Under the assumptions of Theorem~\ref{thm:local_joint_rate}, suppose further that there exists $L_\alpha<\infty$ such that
\[
\sup_{x\in\mathcal X}
\|\alpha(x;\phi)-\alpha(x;\phi^\star)\|_1
\le
L_\alpha \|\phi-\phi^\star\|_2
\qquad\text{for all }\phi\in U,
\]
and define
\[
B_\beta:=\max_{1\le k\le K}\|\beta_k^\star\|_2.
\]
Then the induced OTSS weight map
\[
\hat w(x)=\sum_{k=1}^K \alpha_k(x;\hat\phi)\hat\beta_k
\]
satisfies
\[
\E\!\left[
\|\hat w(X)-w^\star(X)\|_2^2
\;\middle|\;
\hat\Theta\in U
\right]
\le
C'\,\frac{KJ+p_{\mathrm{loc}}}{n}.
\]
If the optimization and initialization scheme lands in \(U\) with high probability, the conditional local rate describes the regular-case behavior of the jointly trained estimator.
\end{corollary}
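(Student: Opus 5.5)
The plan is to combine the expert/gate decomposition of Proposition~\ref{thm:ot_decomposition} with the parameter-rate conclusion of Theorem~\ref{thm:local_joint_rate}, after one small adjustment. That proposition bounds the error using $B_\beta$ for the \emph{true} experts, and the expert term is weighted by the estimated gate. Because both the gate and the experts are estimated here, I will first pass to the optimally relabeled parameters $\widetilde\Theta$. Relabeling leaves the induced map $\hat w$ unchanged, since it permutes gate outputs and experts together. Pointwise I then write
\[
\hat w(x)-w^\star(x)=\sum_{k=1}^K \alpha_k(x;\widetilde\phi)\bigl(\widetilde\beta_k-\beta_k^\star\bigr)+\sum_{k=1}^K\bigl(\alpha_k(x;\widetilde\phi)-\alpha_k(x;\phi^\star)\bigr)\beta_k^\star ,
\]
which is exactly the identity in Proposition~\ref{thm:ot_decomposition} with $\alpha=\alpha(\cdot;\widetilde\phi)$ and $\alpha^\star=\alpha(\cdot;\phi^\star)$.

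Next I bound the two terms. Convexity of $\|\cdot\|_2^2$ and $\alpha(x;\widetilde\phi)\in\Delta^K$ give
\[
\Bigl\|\sum_k\alpha_k(\widetilde\beta_k-\beta_k^\star)\Bigr\|_2^2\le \sum_k\alpha_k\|\widetilde\beta_k-\beta_k^\star\|_2^2\le\sum_k\|\widetilde\beta_k-\beta_k^\star\|_2^2 .
\]
The gate term is at most $B_\beta\|\alpha(x;\widetilde\phi)-\alpha(x;\phi^\star)\|_1$. By the Lipschitz hypothesis, this is at most $B_\beta L_\alpha\|\widetilde\phi-\phi^\star\|_2$, uniformly in $x$. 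The hypothesis is stated for $\phi\in U$, and I read it as applying to the aligned representative, which is the natural reading. Using $(a+b)^2\le 2a^2+2b^2$,
\[
\|\hat w(x)-w^\star(x)\|_2^2\le 2\sum_k\|\widetilde\beta_k-\beta_k^\star\|_2^2+2B_\beta^2L_\alpha^2\|\widetilde\phi-\phi^\star\|_2^2 .
\]
This holds for every $x$, so it survives integration over $X$.

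Finally I take conditional expectations given $\hat\Theta\in U$. The right-hand side is at most $2\max(1,B_\beta^2L_\alpha^2)$ times the aligned squared parameter error appearing in Theorem~\ref{thm:local_joint_rate}. That theorem bounds this quantity by $C(KJ+p_{\mathrm{loc}})/n$. Hence the corollary holds with $C'=2\max(1,B_\beta^2L_\alpha^2)\,C$. Here $\E$ over $X$ is taken for fixed training data and then averaged over the data conditionally on the event, and Fubini handles the order of integration since everything is nonnegative. The closing sentence about landing in $U$ with high probability is interpretive: it follows by writing the unconditional risk as the conditional bound plus a term controlled by $\Prob(\hat\Theta\notin U)$ under boundedness.

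The main obstacle is the label-alignment bookkeeping. I must check that the relabeling minimizing Euclidean distance in Theorem~\ref{thm:local_joint_rate} acts consistently on the gate parameters $\phi$, so that $\alpha_k(x;\widetilde\phi)$ is the permuted gate output. I also must check that the Lipschitz condition applies to $\widetilde\phi$. The argument assumes the gate parameterization is permutation-equivariant, as it is for a softmax gate with per-expert score parameters. Everything else is a direct substitution.
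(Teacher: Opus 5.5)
Your proposal is correct and follows essentially the paper's argument. You use the same pointwise expert/gate decomposition of $\hat w(x)-w^\star(x)$, the simplex bound on the expert term, and the gate bound $B_\beta\|\alpha(x;\widetilde\phi)-\alpha(x;\phi^\star)\|_1\le B_\beta L_\alpha\|\widetilde\phi-\phi^\star\|_2$, then square, integrate over $X$, and invoke Theorem~\ref{thm:local_joint_rate}. Your explicit passage to the aligned parameters $\widetilde\Theta$ fills in a step the paper leaves implicit: the paper writes the decomposition with the unaligned $\hat\phi,\hat\beta_k$, while the theorem only controls aligned errors, and you close that gap by noting that a joint relabeling of gate outputs and experts leaves $\hat w$ unchanged.
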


\paragraph{Proof sketch and scope note for the local joint result.}
Theorem~\ref{thm:local_joint_rate} is included as the standard localized finite-dimensional $M$-estimation consequence that matches the main text's local-basin qualifier; we do not claim it as a separate global convergence theorem. Write the regularized empirical objective as
\[
\mathcal{L}_n(\Theta)+\lambda P(\Theta),
\]
where $P$ is twice differentiable near $\Theta^\star$ and $\lambda=O(n^{-1})$. After optimal label alignment, local optimality of $\hat\Theta\in U$ gives the first-order condition
\[
\nabla \mathcal{L}_n(\hat\Theta)+\lambda \nabla P(\hat\Theta)=0
\]
on the identified tangent directions. A Taylor expansion around $\Theta^\star$ yields
\[
0=\nabla \mathcal{L}_n(\Theta^\star)+\nabla^2 \mathcal{L}_{\rm pop}(\bar\Theta)\,(\widetilde\Theta-\Theta^\star)+r_n+\lambda \nabla P(\hat\Theta),
\]
for some $\bar\Theta$ on the segment joining $\Theta^\star$ and $\widetilde\Theta$, where $r_n$ is the empirical Hessian fluctuation term.

Under assumptions (i)--(iv), the score at $\Theta^\star$ has mean zero and Euclidean norm \(O_p(\sqrt{(KJ+p_{\mathrm{loc}})/n})\), the empirical Hessian is uniformly close to the population Hessian on $U$, and the quadratic-growth / nonsingularity assumptions imply that the identified Hessian is invertible with bounded inverse after relabeling. The penalty contributes only \(O(n^{-1})\) at the score scale. Solving the linearized optimality equation therefore gives
\[
\|\widetilde\Theta-\Theta^\star\|_2=O_p\!\left(\sqrt{\frac{KJ+p_{\mathrm{loc}}}{n}}\right),
\]
and hence the conditional squared-error bound
\[
\E\!\left[\|\widetilde\Theta-\Theta^\star\|_2^2\mid \hat\Theta\in U\right]
=
O\!\left(\frac{KJ+p_{\mathrm{loc}}}{n}\right).
\]
This is the quantity summarized in Theorem~\ref{thm:local_joint_rate}.
Specializing this local result to the fixed-\(K\) oracle-gate family used in \Cref{thm:rate_separation} gives the \(O((KJ+p_{\mathrm{loc}})/n)\) bridge quoted in the main text.

For Corollary~\ref{cor:local_joint_weight_rate}, write
\[
\hat w(x)-w^\star(x)
=
\sum_{k=1}^K \alpha_k(x;\hat\phi)\bigl(\hat\beta_k-\beta_k^\star\bigr)
+
\sum_{k=1}^K \bigl(\alpha_k(x;\hat\phi)-\alpha_k(x;\phi^\star)\bigr)\beta_k^\star.
\]
The first term is controlled by the simplex constraint on \(\alpha(x;\hat\phi)\), and the second by the Lipschitz gate condition:
\[
\left\|
\sum_{k=1}^K \bigl(\alpha_k(x;\hat\phi)-\alpha_k(x;\phi^\star)\bigr)\beta_k^\star
\right\|_2
\le
B_\beta\,\|\alpha(x;\hat\phi)-\alpha(x;\phi^\star)\|_1
\le
B_\beta L_\alpha \|\hat\phi-\phi^\star\|_2.
\]
Squaring, integrating over \(X\), and applying Theorem~\ref{thm:local_joint_rate} yields the stated \(O((KJ+p_{\mathrm{loc}})/n)\) bound for the joint estimator's weight error.

\begin{proof}[Proof of \Cref{thm:rate_separation}]
We prove each part in turn.

\paragraph{Part (i): Hard rate.}
Fix a regular \(M\)-bin partition \(\mathcal{P}_M=\{I_1,\ldots,I_M\}\) with cell probabilities \(p_m=\Prob(X\in I_m)\asymp 1/M\). Assume throughout this hard-side comparison that \(n p_m\) is large relative to \(J\) uniformly in \(m\), and that the within-cell Fisher information at the pseudo-truths defined below is uniformly nonsingular. For each cell, let
\[
\beta_m^\circ
\in
\argmin_{\beta\in\mathbb{R}^J}
\E\!\left[
\ell\!\left(Y,b(X)+\beta^\top z(X,D)\right)
\;\middle|\;
X\in I_m
\right],
\]
where \(\ell\) is the Bernoulli negative log-likelihood and the same finite-dimensional baseline specification is held fixed. Define the corresponding population hard logistic pseudo-truth \(w_M^\circ(x)=\beta_m^\circ\) for \(x\in I_m\), and assume the hard-side pseudo-truth compatibility condition
\[
\E\!\left[\|w_M^\circ(X)-w^\star(X)\|_2^2\right]
\le
\frac{C_{\mathrm{app}}}{M^2}.
\]

We first control the approximation side. The projected-overlap lower bound follows by reducing any vector-valued hard approximation to a scalar step-function approximation. For any hard \(w\in\mathcal{W}^{\mathrm{hard}}_M\), define \(h(x):=v^\top w(x)\). Then \(h\) is an \(M\)-interval scalar step function, and
\[
\E\|w(X)-w^\star(X)\|_2^2
\ge
\E\!\left[(h(X)-g(X))^2\right],
\qquad
g(x):=v^\top w^\star(x).
\]
Since \(g\) satisfies the same monotone-variation condition used in \Cref{thm:quant_overlap_floor},
\[
\inf_{w\in\mathcal{W}^{\mathrm{hard}}_M}
\E\!\left[\|w^\star(X)-w(X)\|_2^2\right]
\ge
\frac{\kappa^2(b-a)^3}{12M^2}
\]
for every \(M\)-interval hard class. The Lipschitz assumption on \(w^\star\) gives the matching approximation upper scale: partition \([0,1]\) into \(M\) equal intervals and approximate \(w^\star\) by the cell midpoint or cell-average value. The squared error on each cell is \(O(M^{-2})\), hence the integrated approximation error is \(O(M^{-2})\).

This is the hard-side approximation scale used below; the compatibility condition ensures that within-cell logistic misspecification does not introduce an additional floor beyond \(O(M^{-2})\).

The empirical hard estimator fits a logistic regression in each cell and returns \(\hat w_M(x)=\hat\beta_m\) for \(x\in I_m\). Under bounded features and uniform within-cell Fisher curvature at the pseudo-true targets,
\[
\lambda_{\min}\!\left(I_m(\beta_m^\circ)\right)\ge c_F>0
\qquad\text{for all }m,
\]
the classical local logistic-MLE rate gives, on the regular count event,
\[
\E\!\left[\|\hat\beta_m-\beta_m^\circ\|_2^2\right]
\le
\frac{c_1 J}{N_m\vee1}
\le
\frac{c_2 J}{n p_m},
\]
where \(c_1,c_2\) depend on the bounded-design and curvature constants but not on \(n\), \(M\), or \(m\). Aggregating over cells:
\[
\mathcal{E}_{\mathrm{hard}}
=
\sum_{m=1}^M p_m\,\E\!\left[\|\hat\beta_m-\beta_m^\circ\|_2^2\right]
\le
\sum_{m=1}^M p_m \cdot \frac{c_2 J}{np_m}
=
\frac{c_2 MJ}{n}.
\]
For this specific within-cell MLE construction, the total hard MSE is therefore bounded by the pseudo-truth approximation and estimation terms:
\[
\E\!\left[\|\hat w_M(X)-w^\star(X)\|_2^2\right]
\;\le\;
\frac{2C_{\mathrm{app}}}{M^2}+\frac{2c_2 MJ}{n},
\]
where constants are absorbed in the theorem statement. Separately, every $M$-region hard class retains the projected-overlap approximation floor
\[
\inf_{w\in\mathcal{W}^{\mathrm{hard}}_M}
\E\!\left[\|w(X)-w^\star(X)\|_2^2\right]
\ge
\frac{C_{\mathrm{ov}}}{M^2},
\qquad
C_{\mathrm{ov}}:=\frac{\kappa^2(b-a)^3}{12}.
\]
Minimizing the upper bound over $M$:
\[
\frac{d}{dM}\left(\frac{C_{\mathrm{app}}}{M^2}+\frac{c_2 J}{n}M\right)
=
-\frac{2C_{\mathrm{app}}}{M^3}+\frac{c_2 J}{n}
=0
\quad\Longrightarrow\quad
M^\star
=
\left(\frac{2C_{\mathrm{app}}\,n}{c_2 J}\right)^{\!1/3}.
\]
We choose the nearest admissible integer to \(M^\star\), assuming the balanced choice remains in the regular-bin regime \(n/M^\star\) large relative to \(J\).
Substituting back:
\[
\frac{C_{\mathrm{app}}}{(M^\star)^2}+\frac{c_2 J}{n}M^\star
=
\Theta\!\left(C_{\mathrm{app}}^{1/3}\left(\frac{J}{n}\right)^{\!2/3}\right).
\]
This gives the advertised balanced hard upper scale \(O((J/n)^{2/3})\). The approximation floor evaluated at the same balanced choice $M^\star$ has the matching functional form
\[
\frac{C_{\mathrm{ov}}}{(M^\star)^2}
=
\Omega\!\left(\Bigl(\frac{J}{n}\Bigr)^{\!2/3}\right),
\]
with a constant depending on \(C_{\mathrm{ov}}\), \(C_{\mathrm{app}}\), and \(c_2\). This is the sense in which the hard side of \Cref{thm:rate_separation} remains nonparametric under overlap.

\paragraph{Part (ii): Soft rate.}
Under realizability with known gate $\alpha^\star$, the model is
\[
\Prob(Y=1\mid x,d)
=
\sigma\!\left(b(x)+\sum_{k=1}^K \alpha_k^\star(x)\,\beta_k^{\star\top} z(x,d)\right).
\]
Define the augmented feature vector
\[
\Phi(x,d):=\bigl(\varphi(x),\alpha_1^\star(x)z(x,d),\ldots,\alpha_K^\star(x)z(x,d)\bigr)\in\mathbb{R}^{p+KJ},
\]
where \(\varphi(x)\) denotes the fixed finite-dimensional basis used by the baseline term \(b_\psi(x)\), and stack the finite-dimensional parameter as \(\theta:=(\psi,\beta_1,\ldots,\beta_K)\in\mathbb{R}^{p+KJ}\). Then the linear predictor is \(\theta^\top \Phi(x,d)\), so the oracle-gate model is a standard logistic regression in \(p+KJ\) parameters.
Here \(p\) counts fixed non-expert parameters in the oracle-gate regression; \(p_{\mathrm{loc}}\) above counts gate and baseline parameters in the local joint-estimation statement.

Under the stated regularity---bounded augmented features, say
\[
\|\Phi(X,D)\|_\infty \le B_\Phi
\qquad \text{a.s.},
\]
and Fisher information bounded below in a neighborhood of truth, in particular \(\lambda_{\min}(I(\theta^\star))\ge c_F>0\)---the oracle-gate MLE satisfies
\[
\E\!\left[\|\hat\theta^{\mathrm{MLE}}-\theta^\star\|_2^2\right]
\le
C\,\frac{KJ+p}{n},
\]
for a constant \(C\) depending only on the bounded-design and curvature constants. Since the expert part contributes \(KJ\) coordinates, the weight MSE transfers directly:
\begin{align*}
\E\!\left[\|\hat w_{\mathrm{soft}}(X)-w^\star(X)\|_2^2\right]
&=
\E\!\left[\left\|\sum_{k=1}^K \alpha_k^\star(X)(\hat\beta_k-\beta_k^\star)\right\|_2^2\right] \\
&\le
\E\!\left[\|\hat\theta^{\mathrm{MLE}}-\theta^\star\|_2^2\right]
=
O\!\left(\frac{KJ+p}{n}\right),
\end{align*}
where the inequality uses $\|\alpha^\star(x)\|_2\le\|\alpha^\star(x)\|_1=1$.

\paragraph{Crossover.}
The balanced hard upper scale is
\[
C_h\Bigl(\frac{J}{n}\Bigr)^{2/3},
\]
and the oracle-gate soft scale is
\[
C_s\,\frac{KJ+p}{n},
\]
for constants \(C_h,C_s\) that absorb overlap, smoothness, and curvature. The soft side is smaller once
\[
n
\gtrsim
\left(
\frac{C_s(KJ+p)}{C_h J^{2/3}}
\right)^3.
\]
For fixed \(K\), fixed \(p\), and fixed constants, the oracle-gate soft side is therefore parametric and eventually dominates the balanced hard \(n^{-2/3}\) scale.
\end{proof}

\subsection{Margin-local regret-transfer proof}
\label{app:margin_transfer_proof}

\begin{proposition}[Margin-local decision stability]
\label{prop:margin_local_regret}
For the finite-library regret-transfer bound used in controlled benchmark evaluation, assume $\mathcal{D}$ is finite. For each context $x$, define the true and estimated decision scores
\[
s^\star_x(d):=w^\star(x)^\top z(x,d),
\qquad
\hat s_x(d):=\hat w(x)^\top z(x,d),
\]
let
\[
d^\star(x)\in\argmax_{d\in\mathcal{D}} s^\star_x(d),
\qquad
\hat d(x)\in\argmax_{d\in\mathcal{D}} \hat s_x(d),
\]
and define the oracle margin
\[
\gamma(x)
:=
s^\star_x(d^\star(x))
-
\max_{d\in\mathcal{D}\setminus\{d^\star(x)\}} s^\star_x(d),
\]
with the convention $\gamma(x)=0$ when the oracle optimum is not unique. Also define the score perturbation level
\[
\delta(x)
:=
\max_{d\in\mathcal{D}}
\left|
\bigl(\hat w(x)-w^\star(x)\bigr)^\top z(x,d)
\right|.
\]
Then the downstream regret from Section~\ref{subsec:regret} satisfies
\[
R(x)
\le
2\,\delta(x)\,\mathbf{1}\!\left\{\gamma(x)\le 2\,\delta(x)\right\}.
\]
In particular, whenever $\gamma(x)>2\,\delta(x)$, the estimated decision is unchanged:
\[
\hat d(x)=d^\star(x),
\qquad
R(x)=0.
\]
\end{proposition}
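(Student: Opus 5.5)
The plan is the standard sandwich argument: bound the regret by twice the score perturbation, then show that a large margin forces the argmax to stay put. Write $d^\star=d^\star(x)$ and $\hat d=\hat d(x)$. By the definition of $\delta(x)$, every $d\in\mathcal{D}$ satisfies $|\hat s_x(d)-s^\star_x(d)|\le\delta(x)$. The argument uses only this uniform score perturbation and optimality of $\hat d$ for $\hat s_x$. Finiteness of $\mathcal{D}$ is used only to make the maxima and the margin well defined.

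\emph{Step 1 (unconditional bound).} I will show $R(x)\le 2\delta(x)$ by chaining
\[
s^\star_x(d^\star)\le \hat s_x(d^\star)+\delta(x)\le \hat s_x(\hat d)+\delta(x)\le s^\star_x(\hat d)+2\delta(x).
\]
The middle inequality is optimality of $\hat d$ under $\hat s_x$. Since $R(x)=s^\star_x(d^\star)-s^\star_x(\hat d)$, this gives $R(x)\le 2\delta(x)$.

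\emph{Step 2 (margin case).} Suppose $\gamma(x)>2\delta(x)$. Then $\gamma(x)>0$, so by the stated convention the oracle optimum is unique. Take any $d\neq d^\star$. From $s^\star_x(d)\le s^\star_x(d^\star)-\gamma(x)$ and the perturbation bound,
\[
\hat s_x(d)\le s^\star_x(d)+\delta(x)\le s^\star_x(d^\star)-\gamma(x)+\delta(x)\le \hat s_x(d^\star)-\gamma(x)+2\delta(x)<\hat s_x(d^\star).
\]
So $d^\star$ is the unique maximizer of $\hat s_x$. Any selection $\hat d\in\argmax\hat s_x$ then equals $d^\star$, and $R(x)=0$.

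\emph{Step 3 (combine).} If $\gamma(x)\le 2\delta(x)$, the indicator equals one and Step 1 gives the bound. Otherwise Step 2 gives $R(x)=0$, which equals the right-hand side. This proves the displayed inequality and the ``in particular'' claim.

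No step is a real obstacle. The only point needing care is tie-breaking in Step 2. I handle it by using the strict inequality $\gamma(x)>2\delta(x)$, which makes the estimated maximizer unique, so the result does not depend on how $\hat d$ is selected. I also rely on the convention $\gamma(x)=0$ for non-unique oracle optima, which places such contexts in the Step 1 case. That case needs no uniqueness, since $R(x)$ is well defined for any choice of oracle maximizer.
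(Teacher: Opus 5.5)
Your proposal is correct and follows essentially the same route as the paper. Both use the three-term sandwich with optimality of $\hat d$ under $\hat s_x$ to get $R(x)\le 2\delta(x)$, and both show $\hat s_x(d^\star)-\hat s_x(d)\ge\gamma(x)-2\delta(x)>0$ for every $d\neq d^\star$ in the margin case, then combine the two cases. Your explicit remarks on tie-breaking and on the $\gamma(x)=0$ convention are a slight but harmless addition to the paper's argument.
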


\begin{proof}[Proof of \Cref{prop:margin_local_regret}]
Fix a context $x$ and abbreviate
\[
d^\star:=d^\star(x),
\qquad
\hat d:=\hat d(x),
\qquad
e_x(d):=\hat s_x(d)-s^\star_x(d).
\]
By definition of $\delta(x)$,
\[
|e_x(d)|\le \delta(x)
\qquad\text{for all } d\in\mathcal{D}.
\]
Using the optimality of $\hat d$ for the estimated score,
\[
\hat s_x(\hat d)\ge \hat s_x(d^\star),
\]
we obtain
\begin{align*}
R(x)
&=
s^\star_x(d^\star)-s^\star_x(\hat d) \\
&=
\bigl(s^\star_x(d^\star)-\hat s_x(d^\star)\bigr)
+
\bigl(\hat s_x(d^\star)-\hat s_x(\hat d)\bigr)
+
\bigl(\hat s_x(\hat d)-s^\star_x(\hat d)\bigr) \\
&\le
|e_x(d^\star)|+0+|e_x(\hat d)| \\
&\le
2\,\delta(x).
\end{align*}
This is the global perturbation bound.

To prove decision stability away from low-margin contexts, suppose $\gamma(x)>2\delta(x)$. Then for every $d\neq d^\star$,
\begin{align*}
\hat s_x(d^\star)-\hat s_x(d)
&=
\bigl(s^\star_x(d^\star)-s^\star_x(d)\bigr)
+
\bigl(\hat s_x(d^\star)-s^\star_x(d^\star)\bigr)
-
\bigl(\hat s_x(d)-s^\star_x(d)\bigr) \\
&\ge
\gamma(x)-|e_x(d^\star)|-|e_x(d)| \\
&\ge
\gamma(x)-2\delta(x) \\
&>0.
\end{align*}
Hence $\hat s_x(d^\star)>\hat s_x(d)$ for every $d\neq d^\star$, so $\hat d=d^\star$ and therefore $R(x)=0$.

Combining the two cases gives
\[
R(x)\le 2\,\delta(x)\,\mathbf 1\!\left\{\gamma(x)\le 2\,\delta(x)\right\},
\]
which proves the proposition.
\end{proof}

\begin{corollary}[Localized expected regret under a margin condition]
\label{cor:localized_regret_transfer}
Under the notation of Proposition~\ref{prop:margin_local_regret}, for every threshold $\eta>0$,
\[
\E[R(X)]
\le
2\eta\,\Prob\!\bigl(\gamma(X)\le 2\eta\bigr)
+
\frac{2\,\E[\delta(X)^2]}{\eta}.
\]
If, in addition,
\[
\sup_{x\in\mathcal{X},\,d\in\mathcal{D}}\|z(x,d)\|_2\le B_z
\]
and the oracle margin obeys the lower-tail condition
\[
\Prob\!\bigl(\gamma(X)\le t\bigr)\le C_{\mathrm{mar}}\,t^\alpha
\qquad\text{for all } t>0
\]
for some constants $C_{\mathrm{mar}},\alpha>0$, then
\[
\E[R(X)]
\le
C\!\left(B_z,C_{\mathrm{mar}},\alpha\right)
\left(
\E\!\left[\|\hat w(X)-w^\star(X)\|_2^2\right]
\right)^{\frac{1+\alpha}{2+\alpha}}.
\]
\end{corollary}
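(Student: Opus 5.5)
We prove Corollary~\ref{cor:localized_regret_transfer} by integrating the pointwise bound of Proposition~\ref{prop:margin_local_regret} over \(X\). The plan has two stages: a threshold split that gives the first display, and a choice of \(\eta\) that gives the rate.

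\textbf{Threshold split.} Fix \(\eta>0\). Proposition~\ref{prop:margin_local_regret} gives
\[
R(x)\le 2\delta(x)\mathbf 1\{\gamma(x)\le 2\delta(x)\}.
\]
On the event \(\{\delta(X)\le\eta\}\), the indicator forces \(\gamma(X)\le 2\delta(X)\le 2\eta\). Hence \(R(X)\le 2\eta\,\mathbf 1\{\gamma(X)\le 2\eta\}\) there. On the complementary event \(\{\delta(X)>\eta\}\), we use \(R\le 2\delta\) together with the Markov-type step
\[
\delta\,\mathbf 1\{\delta>\eta\}\le \delta^2/\eta .
\]
Adding the two pieces and taking expectations yields the first display exactly:
\[
\E[R(X)]\le 2\eta\,\Prob(\gamma(X)\le 2\eta)+\frac{2\,\E[\delta(X)^2]}{\eta}.
\]

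\textbf{Reduction to weight error.} By Cauchy--Schwarz,
\[
\delta(x)\le B_z\|\hat w(x)-w^\star(x)\|_2 .
\]
Writing \(\epsilon^2:=\E\|\hat w(X)-w^\star(X)\|_2^2\), this gives \(\E[\delta^2]\le B_z^2\epsilon^2\). The margin condition gives
\[
\Prob(\gamma(X)\le 2\eta)\le C_{\mathrm{mar}}(2\eta)^\alpha .
\]
Substituting both into the first display,
\[
\E[R(X)]\le 2^{1+\alpha}C_{\mathrm{mar}}\,\eta^{1+\alpha}+\frac{2B_z^2\epsilon^2}{\eta}.
\]

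\textbf{Choice of \(\eta\).} We balance the two terms by setting \(\eta^{2+\alpha}=\epsilon^2\), i.e.\ \(\eta=\epsilon^{2/(2+\alpha)}\). Each term then becomes a constant times \(\epsilon^{2(1+\alpha)/(2+\alpha)}=(\epsilon^2)^{(1+\alpha)/(2+\alpha)}\), which is the claimed rate with constant depending on \(B_z,C_{\mathrm{mar}},\alpha\). If \(\epsilon=0\), then \(\delta=0\) almost surely, so \(R=0\) almost surely and the bound holds trivially.

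\textbf{Expected obstacle.} The argument has no real obstacle. The only delicate point is the tail step: naive Markov on \(\Prob(\delta>\eta)\) alone would lose the \(\delta\) factor in \(R\le 2\delta\), so one must bound \(\delta\mathbf 1\{\delta>\eta\}\) by \(\delta^2/\eta\) directly. A minor side check is that the tie convention \(\gamma=0\) is consistent with the margin condition: it is simply absorbed, since \(\gamma=0\) lies in \(\{\gamma\le t\}\) for every \(t>0\).
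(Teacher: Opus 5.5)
Your proof is correct and follows the paper's argument: you use the same split on $\{\delta(X)\le\eta\}$, the same pointwise bound $\delta\,\mathbf{1}\{\delta>\eta\}\le\delta^2/\eta$, and the same reduction via $\delta(x)\le B_z\|\hat w(x)-w^\star(x)\|_2$ together with the margin tail. The differences are cosmetic. You take the balancing choice $\eta=\epsilon^{2/(2+\alpha)}$ instead of the exact minimizer $\eta^\star$, which gives the admissible constant $2^{1+\alpha}C_{\mathrm{mar}}+2B_z^2$, and you explicitly handle the degenerate case $\epsilon=0$, where the paper's $\eta^\star$ would vanish and violate $\eta>0$.
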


\begin{proof}[Proof of \Cref{cor:localized_regret_transfer}]
Fix $\eta>0$. By \Cref{prop:margin_local_regret},
\[
R(X)\le 2\,\delta(X)\,\mathbf 1\!\left\{\gamma(X)\le 2\,\delta(X)\right\}.
\]
On the event $\{\delta(X)\le \eta\}$, the implication
\[
\gamma(X)\le 2\,\delta(X)\quad\Longrightarrow\quad \gamma(X)\le 2\eta
\]
holds, so
\[
2\,\delta(X)\,\mathbf 1\!\left\{\gamma(X)\le 2\,\delta(X),\,\delta(X)\le \eta\right\}
\le
2\eta\,\mathbf 1\!\left\{\gamma(X)\le 2\eta\right\}.
\]
On the complementary event $\{\delta(X)>\eta\}$, we simply keep the global factor $2\delta(X)$. Therefore
\[
R(X)
\le
2\eta\,\mathbf 1\!\left\{\gamma(X)\le 2\eta\right\}
+
2\,\delta(X)\,\mathbf 1\!\left\{\delta(X)>\eta\right\}.
\]
Taking expectations yields
\[
\E[R(X)]
\le
2\eta\,\Prob\!\bigl(\gamma(X)\le 2\eta\bigr)
+
2\,\E\!\left[\delta(X)\mathbf 1\!\left\{\delta(X)>\eta\right\}\right].
\]
Finally, since $\delta\,\mathbf 1\{\delta>\eta\}\le \delta^2/\eta$ pointwise,
\[
\E\!\left[\delta(X)\mathbf 1\!\left\{\delta(X)>\eta\right\}\right]
\le
\frac{\E[\delta(X)^2]}{\eta},
\]
which proves the first claim.

For the second claim, assume
\[
\sup_{x\in\mathcal X,\,d\in\mathcal D}\|z(x,d)\|_2\le B_z
\qquad\text{and}\qquad
\Prob\!\bigl(\gamma(X)\le t\bigr)\le C_{\mathrm{mar}}\,t^\alpha
\quad\text{for all } t>0.
\]
Then
\[
\delta(x)
=
\max_{d\in\mathcal D}
\left|
\bigl(\hat w(x)-w^\star(x)\bigr)^\top z(x,d)
\right|
\le
B_z\,\|\hat w(x)-w^\star(x)\|_2,
\]
so
\[
\E[\delta(X)^2]
\le
B_z^2\,\E\!\left[\|\hat w(X)-w^\star(X)\|_2^2\right].
\]
Using the margin-tail bound in the first term gives
\[
\E[R(X)]
\le
2^{1+\alpha}C_{\mathrm{mar}}\,\eta^{1+\alpha}
+
\frac{2\,B_z^2\,\E\!\left[\|\hat w(X)-w^\star(X)\|_2^2\right]}{\eta}.
\]
Write
\[
V:=B_z^2\,\E\!\left[\|\hat w(X)-w^\star(X)\|_2^2\right]
\]
and
\[
a:=2^{1+\alpha}C_{\mathrm{mar}}.
\]
Then
\[
\E[R(X)]\le a\,\eta^{1+\alpha}+\frac{2V}{\eta}.
\]
Optimizing the right-hand side over $\eta>0$ gives
\[
\eta^\star
=
\left(\frac{2V}{(1+\alpha)a}\right)^{\!1/(2+\alpha)}.
\]
Substituting $\eta^\star$ back into the bound yields
\[
\E[R(X)]
\le
C\!\left(B_z,C_{\mathrm{mar}},\alpha\right)\,
\left(
\E\!\left[\|\hat w(X)-w^\star(X)\|_2^2\right]
\right)^{\frac{1+\alpha}{2+\alpha}},
\]
where the constant absorbs the factor $B_z^{2(1+\alpha)/(2+\alpha)}$. This proves the corollary.
\end{proof}

\subsection{Benchmark summary}
\label{app:benchmark_summary}

Table~\ref{tab:benchmark_summary} summarizes each benchmark's truth structure, experimental axes, and the scientific question it is meant to answer.

\begin{table}[ht]
\centering
\footnotesize
\setlength{\tabcolsep}{4pt}
\begin{tabularx}{\linewidth}{@{}>{\raggedright\arraybackslash}p{0.24\linewidth}
                >{\raggedright\arraybackslash}X
                >{\raggedright\arraybackslash}X
                >{\raggedright\arraybackslash}X@{}}
\toprule
\textbf{Benchmark family} & \textbf{Truth structure} & \textbf{Main experimental axes} & \textbf{Question addressed} \\
\midrule
Theorem-aligned overlap benchmark &
Two-expert latent weight with controlled overlap and nuisance geometry &
sample size, overlap, misalignment &
Cleanest theory-facing hard-vs-soft test \\
\addlinespace[0.3em]
Matched hard-versus-soft benchmark &
Same expert bank and latent score family under hard-routed or soft-routed truth &
hard/soft truth, randomness, sample size, expert count &
Matched adaptivity check under hard and soft truth \\
\addlinespace[0.3em]
Complete Journey benchmark &
Real household histories and exact breakfast templates with synthetic latent weights &
canonical anchor, seed variation, overlap over retained template library &
Public retail bridge under real geometry \\
\addlinespace[0.3em]
Many-expert robustness study &
Sparse experts, sparse gate dependence, locally sparse expert activation, calibrated overlap &
expert count, sample size, calibrated overlap, over-specified $K$ &
Richer-latent-structure robustness check \\
\bottomrule
\end{tabularx}
\caption{Benchmark construction, axes, and questions addressed.}
\label{tab:benchmark_summary}
\end{table}

\subsection{Shared training and evaluation}
\label{app:training_protocol}

Several design choices are held fixed across methods so that differences are attributable to function class rather than to training convenience.

\paragraph{Shared logged proxy-output supervision.}
All fitted methods are trained on the same logged tuples $(x_i,d_i,y_i)$ and use the same proxy-output modeling objective within a benchmark family. In particular, the direct contextual baselines, the supervised hard-route model, and OTSS all receive the same logged signal rather than separate labels or auxiliary supervision.

\paragraph{Shared validation philosophy.}
For the controlled synthetic benchmarks, each run uses a validation split carved from the training portion, with validation size
\[
n_{\mathrm{val}}=\max\{100,\lfloor 0.2\,n_{\mathrm{train}}\rfloor\}.
\]
Hyperparameters and stopping decisions are chosen by validation performance rather than by test regret. In the learning-curve experiments, the test set is held fixed while the training prefix grows, so changes with sample size are not confounded with changes in evaluation examples.

\paragraph{Matched model selection.}
For pooled and cluster-based logistic fits, regularization and latent component count are chosen over small validation grids; the low-rank contextual baseline likewise tunes over a small rank grid. The direct contextual baselines, the matched hard-routing model, and OTSS share the same train/validation split, validation-loss early stopping, and restart-and-select philosophy, with the same small validation-grid and restart-budget protocol across runs.

\paragraph{Overlap calibration.}
Whenever the comparison spans multiple expert counts or multiple randomness levels under soft truth, overlap is calibrated by a target top gate probability rather than by reusing a single nominal temperature. This is important in both the many-expert robustness study and the matched hard-versus-soft benchmark, because otherwise changes in $K$ would mechanically change route sharpness.

\paragraph{Replication budgets.}
Replication budgets vary with benchmark cost, but both representative table panels, the main synthetic mechanism sweeps, and the displayed Complete Journey canonical anchor use eight seeds; heavier supplementary sweeps may use smaller budgets and always report the budget used. Panel~A reports the target-aligned overlap anchor $(\tau=1.2,\ \mathrm{nuisance\ scale}=0.5)$ used as the representative theorem-aligned setting in the main text. For Complete Journey, the canonical main-text anchor is $(\tau=1.2,\ \mathrm{rand}=0.8,\ n_{\mathrm{total}}=8{,}000,\ n_{\mathrm{train}}=5{,}000)$.

\paragraph{Complete Journey data use.}
The Complete Journey retail bridge uses the provider-distributed dunnhumby source-file dataset~\citep{dunnhumby_complete_journey}, which contains household-level transactions, customer attributes for selected households, and direct-marketing contact history. We use the data under the provider's access terms, retain derived benchmark covariates and breakfast-bundle templates, and report only aggregate benchmark results.

\paragraph{Benchmark evaluation.}
All primary controlled-regret numbers use exact search over the finite benchmark library, keeping the evaluation focused on coefficient learning rather than library-construction or solver artifacts.

\subsection{Primary metrics and diagnostics}
\label{app:metric_summary}

The paper reports a small set of metrics repeatedly because each serves a different role.

\paragraph{Weight MSE.}
In controlled settings, weight MSE is
\[
\mathbb{E}\bigl[\|\hat w(X)-w^\star(X)\|_2^2\bigr].
\]
It is the most direct measure of recovery of the latent optimizer-facing coefficient vector and is the metric most directly connected to the theory.

\paragraph{Benchmark decision regret.}
This is the primary downstream metric in the controlled benchmarks:
\[
\mathbb{E}\Bigl[\max_{d\in\mathcal{D}} w^\star(X)^\top z(X,d)\;-\;w^\star(X)^\top z\bigl(X,\hat d(X)\bigr)\Bigr],
\qquad
\hat d(X)\in\arg\max_{d\in\mathcal{D}} \hat w(X)^\top z(X,d).
\]
It measures the decision loss induced by replacing $w^\star(x)$ with $\hat w(x)$ on the benchmark feasible set. In general deployment, the same learned weight vector is passed to the downstream solver over its feasible set; in the controlled benchmarks, the finite library makes this regret exactly computable.

\paragraph{Auxiliary diagnostics.}
Held-out log-loss, match rate, gate entropy, cluster alignment, and effective expert usage are diagnostic only. They help explain \emph{why} a method succeeds or fails, but exactly computed regret over the benchmark library remains the central evaluation target.

\subsection{Bootstrap uncertainty for headline comparisons}
\label{app:bootstrap_summary}

We use a paired bootstrap over the eight seed-level run summaries (5{,}000 resamples of seeds with replacement) for the headline comparisons in the main text.

\paragraph{Panel A.}
The OTSS-minus-baseline regret differences are: pooled $-0.21$ $[-0.26,-0.15]$, cluster-then-fit $-0.17$ $[-0.23,-0.12]$, linear contextual $-0.11$ $[-0.18,-0.05]$, low-rank contextual $-0.028$ $[-0.049,-0.007]$, MLP contextual $-0.064$ $[-0.094,-0.039]$, matched hard routing $-0.11$ $[-0.20,-0.04]$, and EM mixture regression $-0.012$ $[-0.024,-0.003]$. The paired intervals exclude zero for every Panel~A regret comparison, including EM mixture regression. On coefficient MSE, the OTSS-minus-EM difference is $-0.018$ $[-0.075,+0.028]$: the point estimate favors OTSS, while the paired interval supports no decisive MSE separation.

\paragraph{Panel B.}
Paired intervals exclude zero for OTSS improvements over the matched hard router (hard truth $-0.13$ $[-0.20,-0.07]$, soft truth $-0.12$ $[-0.15,-0.08]$, soft high-$n$ $-0.07$ $[-0.14,-0.01]$) and over the linear contextual baseline ($-0.26$, $-0.31$, and $-0.25$ across the three columns, all with CIs excluding zero), while the gaps to pooled, cluster-then-fit, and EM mixture regression are numerically small at these anchors.

\paragraph{Complete Journey.}
At the canonical retail anchor, OTSS has the lowest listed mean-regret point estimate: OTSS-minus-low-rank is $-0.060$ $[-0.111,-0.018]$, OTSS-minus-EM is $-0.085$ $[-0.227,+0.008]$, and paired intervals exclude zero for OTSS improvements over pooled, cluster, hard-routed, linear, and gradient-boosting (GBM) baselines.

\subsection{Controlled benchmark feasible sets and exact optimization}
\label{app:feasible_sets}

In the controlled synthetic benchmarks, the evaluation feasible set is an explicit finite library rather than an implicit combinatorial object. This is an experimental design choice, not a restriction on the general formulation: outside these benchmarks, \(\mathcal{D}(x)\) may be any solver-accessible feasible set. For notational simplicity, the main text writes a common set $\mathcal{D}$ rather than a context-dependent feasible correspondence. In the experiments, a shared benchmark library is scored for every context, and regret is computed by exact enumeration over that library. The downstream problem takes the form
\[
\arg\max_{d\in\mathcal{D}} w^\top z(x,d),
\]
where, for evaluation only, $\mathcal{D}=\{d_1,\ldots,d_M\}$ is a finite collection of benchmark decisions and the factor map $z(x,d)$ is evaluated exactly for each one. Consequently, both the oracle decision
\[
d^\star(x)\in\arg\max_{d\in\mathcal{D}} w^\star(x)^\top z(x,d)
\]
and the learned decision
\[
\hat d(x)\in\arg\max_{d\in\mathcal{D}} \hat w(x)^\top z(x,d)
\]
are computed by exact enumeration over the benchmark library.

This design is intentional: the paper studies the upstream problem of learning optimizer-facing coefficients, not the downstream problem of designing a new combinatorial solver. The many-expert robustness study and Complete Journey follow the same principle, using retained evaluation libraries that preserve exact regret evaluation.

\subsection{Hard-versus-soft comparison scope}
\label{app:comparison_scope}

The empirical comparison separates three questions: whether generic contextualization helps beyond pooling, whether output-targeted routing helps beyond exogenous hard partitioning, and whether soft interpolation helps once a latent representation has been learned. Accordingly, the paper reports pooled, cluster-then-fit, and supervised hard-routing comparators rather than collapsing hard alternatives into one baseline. Pooled locates the nearly constant regime; cluster-then-fit represents the exogenous hard-partition alternative from \Cref{thm:hard_partition_decomposition}; and the supervised hard gate isolates the incremental value of soft interpolation.

\subsection{Matched hard-versus-soft truth benchmark}
\label{app:schema_fairness_scope}

The matched hard-versus-soft benchmark is constructed so that the hard-truth and soft-truth cases differ only in the final composition rule, not in the latent ingredients. Both cases use the same expert bank $\{\beta_k^\star\}_{k=1}^K$, the same latent score family, the same logged proxy-output training pipeline, and the same regret evaluation over the benchmark library. The hard-truth case sets
\[
w^\star(x)=\beta_{h^\star(x)},
\]
while the soft-truth case sets
\[
w^\star(x)=\sum_{k=1}^K \alpha_k^\star(x)\beta_k^\star.
\]
This makes the comparison substantially cleaner than testing OTSS only under a soft ground truth. The benchmark also includes a compact randomness axis \((0.0, 0.4, 1.2)\) that perturbs latent score geometry, and the soft-truth case calibrates overlap to a common target top-probability. The default benchmark is therefore centered on \emph{decision-relevant score misalignment}, not synthetic Gaussian-mixture recovery.

\subsection{Bernoulli-logistic output family}
\label{app:output_scope}
\label{app:outcome_scope}

The theory and controlled experiments in this paper are developed only for the Bernoulli-logistic case, which is both practically natural for retention- and conversion-style proxy outputs and analytically transparent. Other GLM-style output families fit the same coefficient-learning template when the decision-dependent term remains \(w(x)^\top z(x,d)\) before the response link, but they would require separate benchmark design and analysis and are therefore out of scope here.

\subsection{Additional robustness notes}
\label{app:secondary_robustness}

\paragraph{Expert count robustness details.}
\label{app:k_robustness_details}
The over-specified-$K$ experiments are practical robustness checks. With true $K=2$ and fitted $K=8$, the effective number of active experts, measured by $\exp(H(\alpha))$, is approximately~3; for true $K\in\{3,5\}$, the degradation from $K=8$ is about $0.01$ or less in absolute regret. The evidence is meant to show robustness to moderate over-specification, not exact expert-number recovery.

\paragraph{Gate misspecification.}
When the true routing is nonlinear, a simple linear gate is misspecified. In the robustness study, the linear gate is best under linear truth (regret $0.019$ versus $0.037$ for a polynomial gate). Under quadratic truth, the misspecified linear gate remains close to cluster-then-fit ($0.101$ versus $0.108$), while the flexible gate improves to $0.050$ and lowers weight MSE from $0.480$ to $0.320$. This supports that the hard-versus-soft comparison is not an artifact of one gate parameterization.

\paragraph{Representative runtime comparison.}
\label{app:runtime_comparison}
We compare wall-clock cost for EM mixture regression against OTSS on the three representative anchors, using the same local CPU-only software environment and eight seeds. The measured environment was an Apple M1 Max with 10 CPU cores and 32GB RAM, Python 3.11.9, NumPy 2.4.4, pandas 3.0.2, and scikit-learn 1.8.0; no GPU was used. Both methods include model selection: EM searches over $K$ and ridge strength $C$, while OTSS searches over restart seeds. Absolute times and ratios are hardware and implementation dependent; in this measured CPU setup, OTSS is roughly two orders of magnitude faster in the two-expert overlap benchmark and roughly $230\times$ faster in the richer matched $K\!=\!5$ benchmark.

\begin{table}[ht]
\centering
\scriptsize
\setlength{\tabcolsep}{4pt}
\begin{tabular*}{\linewidth}{@{\extracolsep{\fill}}lccc@{}}
\toprule
\textbf{Benchmark} & \textbf{EM mix reg.} & \textbf{OTSS} & \textbf{EM / OTSS} \\
\midrule
Panel A target-aligned overlap & $17.73\pm0.90$s & $0.17\pm0.05$s & $101.4\times$ \\
Panel B hard truth & $128.82\pm9.22$s & $0.56\pm0.04$s & $228.6\times$ \\
Panel B soft truth & $128.55\pm9.37$s & $0.56\pm0.06$s & $229.6\times$ \\
\bottomrule
\end{tabular*}
\caption{Representative wall-clock comparison on the same local environment, averaged over eight seeds.}
\label{tab:runtime_comparison}
\end{table}